\documentclass{article}

\usepackage{iclr2024_conference,times}

\usepackage[utf8]{inputenc} %
\usepackage[T1]{fontenc}    %
\usepackage{hyperref}       %
\usepackage{url}            %
\usepackage{booktabs}       %
\usepackage{amsfonts}       %
\usepackage{nicefrac}       %
\usepackage{microtype}      %
\usepackage{hyperref}       %
\hypersetup{
    colorlinks=true,
    linkcolor=blue,
    citecolor=blue,
    urlcolor=magenta,
}

\usepackage[toc,page]{appendix}
\usepackage{ulem}
\usepackage{mathtools}    %
\usepackage{bbm}    %
\usepackage{textcomp}    %
\usepackage{caption}
\usepackage{subcaption}
\usepackage{wrapfig}
\usepackage{bm}
\usepackage{multirow}
\usepackage{diagbox}
\usepackage{float}
\usepackage{fixltx2e}

\usepackage{graphicx}
\usepackage{algorithm}
\usepackage{algpseudocode}
\usepackage[colorinlistoftodos]{todonotes}
\usepackage{subcaption}
\usepackage{enumitem}
\usepackage{tablefootnote}
\usepackage{comment}

\newenvironment{proof}[1][Proof]{\paragraph{#1:}}{\hfill$\square$\vspace{4pt}}
\newtheorem{theorem}{Theorem}[section]

\newtheorem{lemma}[theorem]{Lemma}

\newcommand{\source}{{1}}
\newcommand{\transf}{{2}}

\newcommand{\naive}{{\texttt{naive}}}
\newcommand{\oracle}{{\texttt{oracle}}}

\usepackage{amsmath}
\DeclareMathOperator*{\argmin}{arg\,min}

\title{Transferring Learning Trajectories \\ of Neural Networks}

\iclrfinalcopy
\author{Daiki Chijiwa\thanks{Corresponding author: \texttt{daiki.chijiwa@ntt.com}} \\
NTT Computer and Data Science Laboratories, NTT Corporation
}

\begin{document}

\maketitle

\begin{abstract}

Training deep neural networks (DNNs) is computationally expensive, which is problematic especially when performing duplicated or similar training runs in model ensemble or fine-tuning pre-trained models, for example. Once we have trained one DNN on some dataset, we have its learning trajectory (i.e., a sequence of intermediate parameters during training) which may potentially contain useful information for learning the dataset. However, there has been no attempt to utilize such information of a given learning trajectory for another training. In this paper, we formulate the problem of "transferring" a given learning trajectory from one initial parameter to another one (named {\it learning transfer problem}) and derive the first algorithm to approximately solve it by matching gradients successively along the trajectory via permutation symmetry. We empirically show that the transferred parameters achieve non-trivial accuracy before any direct training, and can be trained significantly faster than training from scratch.

\end{abstract}

\section{Introduction}
Enormous computational cost is a major issue in deep learning, especially in training large-scale neural networks (NNs).
Their highly non-convex objective and high-dimensional parameters make their training difficult and inefficient.
Toward a better understanding of training processes of NNs, their loss landscapes~\citep{hochreiter1997flat,choromanska2015loss} have been actively studied from viewpoints of optimization~\citep{haeffele2017global,li2017convergence,yun2018global} and geometry~\citep{freeman2017topology,simsek2021geometry}.
One of the geometric approaches to loss landscapes is mode connectivity~\citep{garipov2018loss,draxler2018essentially}, which shows the existence of low-loss curves between any two optimal solutions trained with different random initializations or data ordering.
This indicates a surprising connection between seemingly different independent trainings.

Linear mode connectivity (LMC), a special case of mode connectivity, focuses on whether or not two optimal solutions are connected by a low-loss linear path, which is originally studied in relation to neural network pruning~\citep{frankle2020linear}.
It is known that the solutions trained from the same initialization (and data ordering in the early phase) tend to be linearly mode connected~\citep{nagarajan2019uniform,frankle2020linear}, but otherwise they cannot be linearly connected in general.
However, \citet{entezari2021role} observed that even two solutions trained from different random initializations can be linearly connected by an appropriate permutation symmetry.
\citet{ainsworth2023git} developed an efficient method to find such permutations and confirmed the same phenomena with modern NN architectures.
These observations strength the expectation on some sort of similarity between two independent training runs even from different  initializations, via permutation symmetry.

In this paper, motivated by these observations, we make the first attempt to leverage such similarity between independent training processes for efficient training.
In particular, we introduce a novel problem called {\it learning transfer problem}, which aims to reduce training costs for seemingly duplicated training runs on the same dataset, such as model ensemble or knowledge distillation, by transferring a learning trajectory for one initial parameter to another one without actual training.
The problem statement is informally stated as follows:

\vspace{-1.5mm}
\paragraph{Learning transfer problem {\normalfont (informal)}.} {\it Suppose that a source learning trajectory $(\theta_\source^0, \cdots, \theta_\source^T)$ is given for some initial parameter $\theta_\source^0$. Given another initial parameter $\theta_\transf^0$, called target initialization, how can we synthesize the learning trajectory $(\theta_\transf^0,\cdots,\theta_\transf^T)$ for $\theta_\transf^0$ efficiently? }
\vspace{1mm}

To tackle this problem, as illustrated in Figure~\ref{figure:transfer learning trajectory}, we take an approach to transform the source trajectory $(\theta_\source^0, \cdots, \theta_\source^T)$ by an appropriate permutation symmetry $\pi$ as in the previous works of LMC.
In Section~\ref{section:learning transfer}, we formulate the learning transfer problem as a non-linear optimization problem for $\pi$.
We also investigate how much the source trajectory for $\theta_\source^0$ can be transformed close to the target trajectory for $\theta_\transf^0$ by the optimal $\pi$, both theoretically and empirically.
We then derive a theoretically-grounded algorithm to approximately solve it, and also develop practical techniques to reduce its storage and computational cost.
Since our final algorithm requires only several tens of gradient computations and lightweight linear optimization in total, we can transfer a given source trajectory very efficiently compared to training from scratch. 
In Section~\ref{section:experiments}, first we empirically demonstrate that learning trajectories can be successfully transferred between two random or pre-trained initializations, which leads to non-trivial accuracy without direct training (Section~\ref{section:experiments:learning transfer}).
Next we further confirmed that the transferred parameters can indeed accelerate the convergence in their subsequent training (Section~\ref{section:experiments:acceleration}).
Finally, we investigate what properties the transferred parameters inherit from the target initializations (Section~\ref{section:what is being inherited}).
Surprisingly, we observed that, even when the source trajectory is trained from poorly generalizing pre-trained initialization $\theta_\source^0$ and thus inherits poor generalization, the  one transferred to more generalizing initialization $\theta_\transf^0$ inherits better generalization from $\theta_\transf^0$.

In summary, our contributions are as follows: (1) we formulated the brand new problem of learning transfer with theoretical evidence, (2) derived the first algorithm to solve it, (3) empirically showed that transferred parameters can achieve non-trivial accuracy without direct training and accelerate their subsequent training, and (4) investigated the benefit/inheritance from target initializations. Finally, more related works are discussed in Appendix~\ref{appendix:related works}.

\begin{figure}
\centering
    \begin{minipage}{0.38\textwidth}
        \centering
        \includegraphics[width=\textwidth]{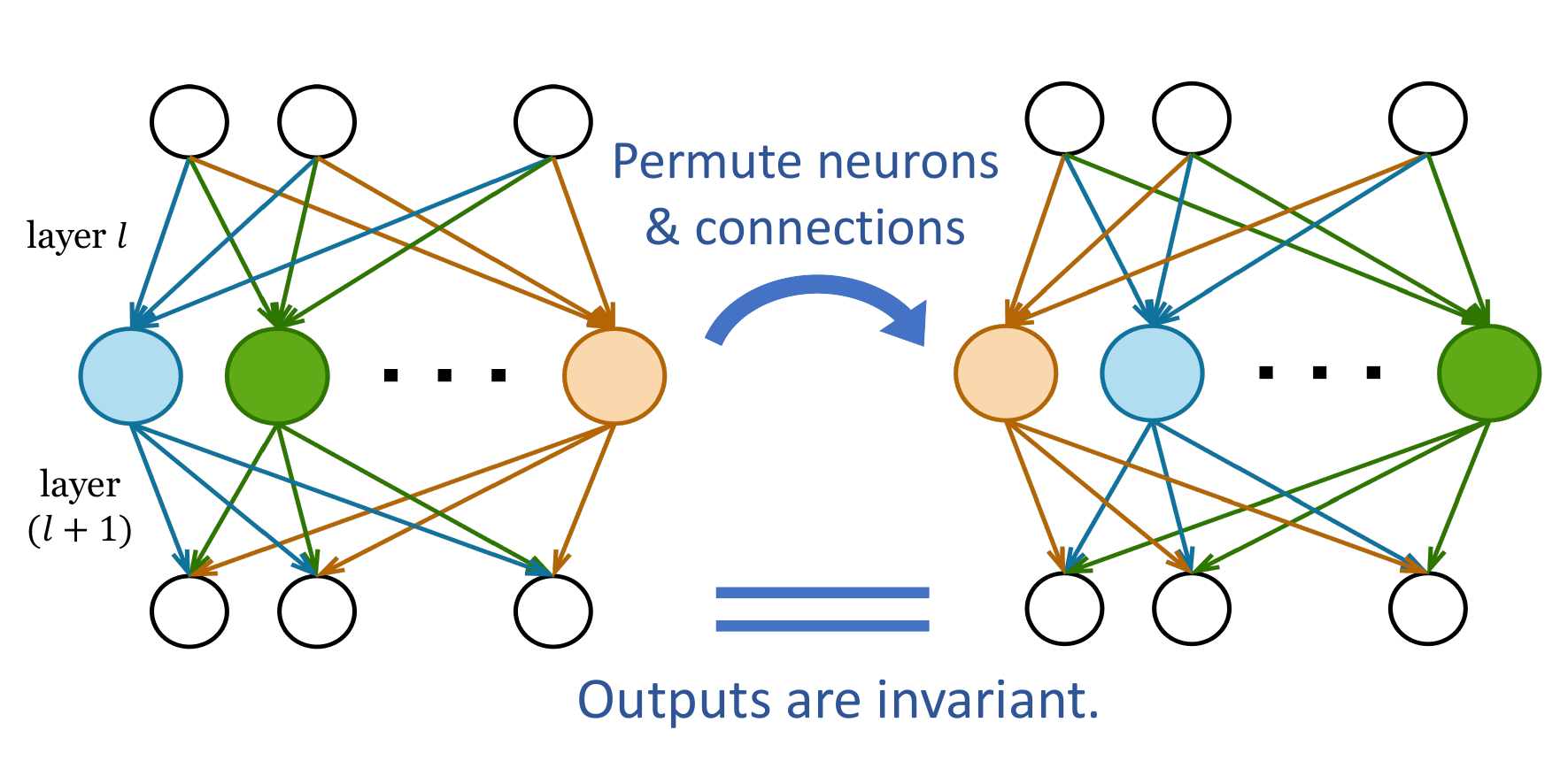}
        \vspace{-5mm}
        \captionof{figure}{Permutation symmetry of neural networks. (Section~\ref{section:permutation symmetry of NNs})}
        \label{figure:permutation symmetry}
    \end{minipage}
    \hspace{0.03\textwidth}
    \begin{minipage}{0.57\textwidth}
        \centering
        \includegraphics[width=\textwidth]{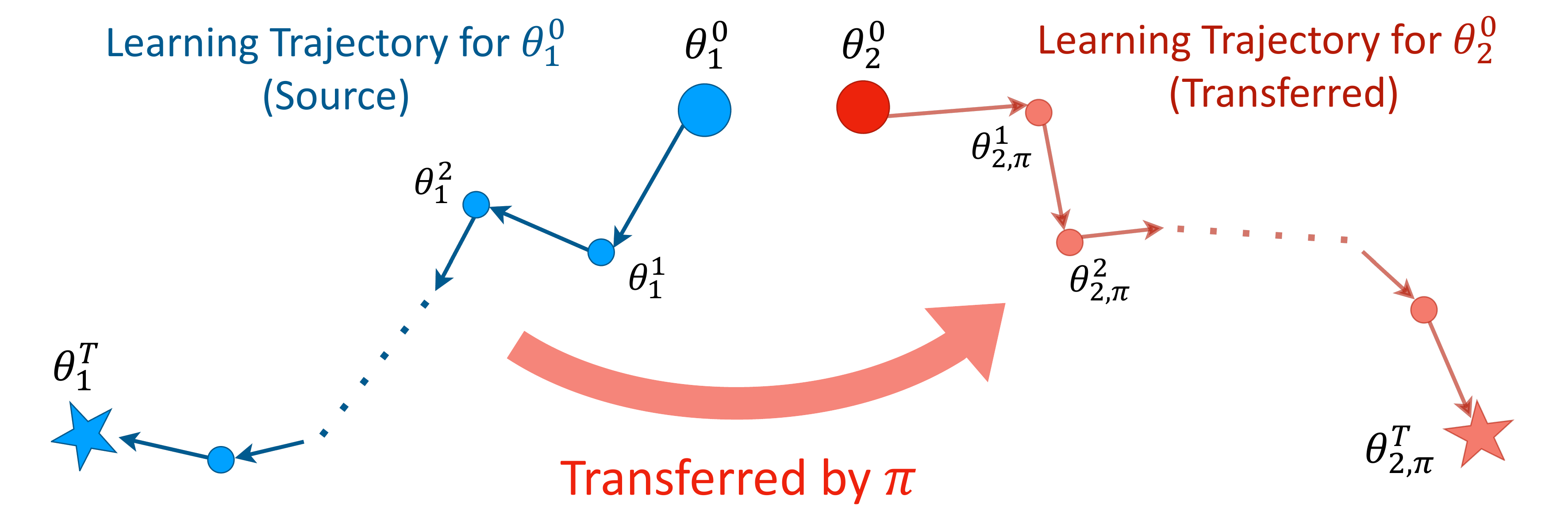}
        \vspace{-5mm}
        \captionof{figure}{Transfer a learning trajectory from one to another NN by a permutation symmetry $\pi$. (Section~\ref{section:learning transfer})}
        \label{figure:transfer learning trajectory}
    \end{minipage}
    \vspace{-3mm}
\end{figure}

\vspace{-1mm}
\section{Background}\label{section:background}

\vspace{-1mm}
\subsection{Neural networks}

Let $L, N \in \mathbb{N}$. Let $f(x;\theta)$ be an $L$-layered neural network (NN) parameterized by $\theta\in\mathbb{R}^N$ with a non-linear activation function $\sigma:\mathbb{R}\to\mathbb{R}$ and intermediate dimensions $(d_0,\cdots,d_L) \in \mathbb{N}^{L+1}$.
Given an input $x\in\mathbb{R}^{d_0}$, the output $f(x;\theta) := x_L \in \mathbb{R}^{d_L}$ is computed inductively as follows:
\begin{equation*}
x_{i} := \begin{cases}
x, & (i=0) \\
\sigma(W_i x_{i-1} + b_i), & (1 \leq i \leq L-1) \\
W_{L} x_{L-1} + b_{L-1}, & (i=L)
\end{cases}
\end{equation*}
where $W_i \in \mathbb{R}^{d_i \times d_{i-1}}, b_i\in\mathbb{R}^{d_i}$ are weight matrices and bias vectors.
Under these notation, the parameter vector $\theta$ is described as $\theta = (W_1, b_1, \cdots, W_{L}, b_L) \in \mathbb{R}^{N}$.

Stochastic gradient descent (SGD) is a widely used approach for training neural networks.
Let $\mathcal{X}$ be the input space $\mathbb{R}^{d_0}$ and $\mathcal{Y}$ be the output space $\mathbb{R}^{d_L}$.
Let $\mathcal{D}$ be a probabilistic distribution over the input-output space $\mathcal{X}\times\mathcal{Y}$, and $\mathcal{L}:\mathcal{Y}\times\mathcal{Y}\to\mathbb{R}$ be a differentiable loss function.
SGD trains a neural network $f(x;\theta)$ by iterating the following steps: (1) Sampling a mini-batch $B = ((x_1,y_1),\cdots,(x_b,y_b)) \sim \mathcal{D}^b$ of size $b \in \mathbb{N}$, (2) computing an estimated gradient $g_B := \frac{1}{b} \sum_{i=1}^b \nabla_\theta \mathcal{L}(f(x_i;\theta), y_i)$ for the mini-batch and (3) updating the model parameter $\theta$ by $\theta - \alpha g_B + \text{(momentum)}$ where $\alpha\in\mathbb{R}_{>0}$ is a fixed or scheduled step size.

\subsection{Permutation symmetry of NNs}\label{section:permutation symmetry of NNs}

For simplicity, we assume that all bias vectors $b_i$ are zero by viewing them as a part of the weight matrices.
Let $\Theta$  be the parameter space $\{\theta = (W_1, \cdots, W_L) \in \mathbb{R}^N\}$ for the $L$-layered neural network $f(x;\theta)$.
Now we introduce a permutation group action on the parameter space $\Theta$.
For $n \in \mathbb{N}$, let $S_n$ denotes the symmetric group over $\{1,\cdots,n\}\subset\mathbb{N}$, which is the set of all bijective mapping $\sigma: \{1,\cdots,n\} \to \{1,\cdots,n\}$.
We define our permutation group $G$ by $G := S_{d_1}\times\cdots\times S_{d_{L-1}}$.
The group action $G \times \Theta \to \Theta$ is defined as follows:
For $\pi=(\sigma_1,\cdots,\sigma_{L-1})\in G$ and $\theta\in\Theta$, the action $\pi\theta$ is defined by
\begin{equation}
    \pi \theta := ( \sigma_1 W_1, \cdots, \sigma_{i} W_i \sigma_{i-1}^{-1}, \cdots, W_L \sigma_{L-1}^{-1}) \in \Theta,
\end{equation}
where each $\sigma_i$ is viewed as the corresponding permutation matrix of size $d_i\times d_i$.
We call this group action the permutation symmetry of $L$-layered neural networks.

Simply speaking, the action $\pi\theta$ just interchanges the axes of the intermediate vector $x_i$ of the neural network $f(x;\theta)$ with the corresponding base change of the weight matrices and bias vectors (Figure~\ref{figure:permutation symmetry}).
Thus we can see that this action does not change the output of the neural network, i.e., $f(x;\pi\theta)=f(x;\theta)$ for every $x\in\mathcal{X},\theta\in\Theta,\pi\in G$.
In other words, the two parameters $\theta$ and $\pi\theta$ can be identified from the functional perspective of neural networks.

\subsection{Parameter alignment by permutation symmetry}\label{section:preliminaries:parameter alignment by permutation}

Previous work by \citet{ainsworth2023git} attempts to merge given two NN models into one model by leveraging their permutation symmetry.
They reduced the merge problem into the parameter alignment problem:
\begin{equation}\label{parameter alignment problem}
    \min_{\pi\in G} \lVert \pi\theta_1 - \theta_2 \rVert_2^2 =  \min_{\pi=(\sigma_1,\cdots,\sigma_{L-1})} \sum_{1\leq i \leq L} \lVert \sigma_i W_i\sigma_{i-1}^{-1} - Z_i \rVert_F^2,
\end{equation}
where $\theta_1 = (W_1,\cdots,W_L)$ and $\theta_2=(Z_1,\cdots,Z_L)$ are the parameters to be merged.
To solve this, they also proposed a coordinate descent algorithm by iteratively solving the following linear optimizations regarding to each $\sigma_i$'s:
\begin{equation}\label{linear sum assignment problem}
    \max_{\sigma_i\in S_i} \langle \sigma_i, Z_i \sigma_{i-1} W_i^\top + Z_{i+1}^\top \sigma_{i+1} W_{i+1} \rangle
\end{equation}
The form of this problem has been well-studied as a linear assignment problem, and we can solve it in a very efficient way~\citep{kuhn1955hungarian}.
Although the coordinate descent algorithm was originally proposed for model merging, we can also use it for other problems involving the parameter alignment problem (eq.~\ref{parameter alignment problem}).

\section{Learning Transfer}\label{section:learning transfer}

In this section, first we formulate the problem of transferring learning trajectories (which we call  {\it learning transfer problem}) as a non-linear optimization problem.
Next, we derive an algorithm to solve it by reducing the non-linear optimization problem to a sequence of linear optimization problems.
Finally, we introduce additional techniques for reducing  the storage and computation cost of the derived algorithm.

\subsection{Problem formulation}\label{section:problem formulation}

Let $f(x;\theta)$ be some NN model with an $N$-dimensional parameter $\theta\in\mathbb{R}^N$.
A sequence of $N$-dimensional parameters $(\theta^0, \cdots, \theta^T)\in\mathbb{R}^{N\times (T+1)}$ is called a {\it learning trajectory} of length $T$ for the neural network $f(x;\theta)$ if the training loss for $\theta^t$ decreases as $t$ increases, from the initial parameter $\theta^0$ to the convergent one $\theta^T$.
Note that \textit{we do not specify what $t$ represents a priori; it could be iteration, epoch or any notion of training timesteps}.\footnote{In our experiments, due to practical constraints, we mainly consider cases where $t$ represents training epoch or linear interpolating step as explained in Section~\ref{section:additional techniques}, which leads to the length $T$ being smaller than a hundred.}
Now we can state our main problem:
\vspace{-1mm}
\paragraph{Learning transfer problem {\normalfont (informal)}.} Suppose that a learning trajectory $(\theta_\source^0, \cdots, \theta_\source^T)$ is given for an initial parameter $\theta_\source^0$, which we call a {\it source trajectory}. Given another initial parameter $\theta_\transf^0$ "similar" to $\theta_\source^0$ in some sense, how can we synthesize the learning trajectory $(\theta_\transf^0,\cdots,\theta_\transf^T)$, which we call a {\it transferred trajectory}, for the given initialization $\theta_\transf^0$? (Figure~\ref{figure:transfer learning trajectory})
\vspace{1mm}

To convert this informal problem into a computable one, we need to define the notion of "similarity" between two initial parameters.
As a first step, in this paper, we consider two initializations are "similar" if two learning trajectories starting from them are indistinguishable up to permutation symmetry of neural networks (Section~\ref{section:permutation symmetry of NNs}).
In other words, for the two learning trajectories $(\theta_\source^0,\cdots,\theta_\source^T)$ and $(\theta_\transf^0,\cdots,\theta_\transf^T)$, we consider the following assumption:
\vspace{-1mm}
\paragraph{Assumption (P).} There exists a permutation $\pi$ satisfying $\pi (\theta_\source^{t} - \theta_\source^{t-1}) \approx \theta_\transf^{t} - \theta_\transf^{t-1}$ for  $t=1,\cdots, T$, where the transformation $\pi (\theta_\source^{t}-\theta_\source^{t-1})$ is as defined in Section~\ref{section:permutation symmetry of NNs}.
\vspace{1mm}

Under this assumption, if we know the permutation $\pi$ providing the equivalence between the source and transferred trajectories, we can recover the latter one $(\theta_\transf^0,\cdots,\theta_\transf^T)$ from the former one $(\theta_\source^0, \cdots,\theta_\source^T)$ and the permutation $\pi$, by setting $\theta_\transf^{t} := \theta_\transf^{t-1} + \pi(\theta_\source^{t}-\theta_\source^{t-1})$ inductively on $t$ (Figure~\ref{figure:transfer learning trajectory}). Therefore, the learning-trajectory problem can be reduced to estimating the permutation $\pi$ from the source trajectory $(\theta_\source^0,\cdots,\theta_\source^T)$ and the given initialization $\theta_\transf^0$.

Naively, to estimate the  permutation $\pi$, we want to consider the following optimization problem:
\begin{equation}\label{naive optimization problem}
    \min_{\pi} \sum_{t=1}^T \left\lVert \pi(\theta_1^t - \theta_1^{t-1}) - (\theta_2^t - \theta_2^{t-1}) \right\rVert_2^2
\end{equation}
However, this problem is ill-defined in our setting since each $\theta_\transf^t$ is not available for $1\leq t \leq T$ in advance. Even if we defined $\theta_\transf^t := \theta_\transf^{t-1} + \pi (\theta_\source^t - \theta_\source^{t-1})$ in the equation~(\ref{naive optimization problem}) as discussed above, the optimization problem became trivial since any permutation $\pi$ makes the $L^2$ norm to be zero. 

Thus we need to fix the optimization problem (eq.~\ref{naive optimization problem}) not to involve unavailable terms. We notice that the difference $\theta_\transf^{t} - \theta_\transf^{t-1}$ can be roughly approximated by a negative gradient at $\theta_\transf^{t-1}$ averaged over a mini-batch if the trajectory is enough fine-grained.
Therefore, we can consider the approximated version of equation~(\ref{naive optimization problem}) as follows:
\begin{equation}\label{gradient matching problem}
    \mathcal{P}_T: \ \  \min_{\pi} \sum_{t=0}^{T-1} \left\lVert \pi \nabla_{\theta_\source^t} \mathcal{L} - \nabla_{\theta_{\transf,\pi}^t} \mathcal{L} \right\rVert_2^2, \text{  where } \theta_{\transf,\pi}^t := \theta_{\transf,\pi}^{t-1} + \pi (\theta_\source^t - \theta_\source^{t-1}).
\end{equation}
In contrast to the equation~(\ref{naive optimization problem}), this optimization problem is well-defined even in our setting because each $\theta_{\transf,\pi}^{t}$ is defined by using the previous parameter $\theta_{\transf,\pi}^{t-1}$ inductively.

\subsection{When and how does Assumption (P) holds?}\label{section:validating assumption of learning transfer problem}

Before delving into the optimization problem $\mathcal{P}_T$, here we briefly investigate when and how our Assumption (P) holds. To measure the similarity of two vectors $\pi(\theta_\source^t - \theta_\source^{t-1})$ and $\theta_\transf^t - \theta_\transf^{t-1}$, we mainly use  a variant of normalized distance $\lVert v_1 - v_2 \rVert / \sqrt{\lVert v_1 \rVert \lVert v_2 \rVert}$ for two vectors $v_1, v_2 \in \mathbb{R}^n$, which can also be considered as cosine distance when $\lVert v_1\rVert \approx \lVert v_2 \rVert$.

First of all, we study the case of $2$-layered ReLU neural networks $f_{w,v}(x) := \sum_{i=1}^N v_i \sigma(\sum_{j=1}^d w_{ij} x_j)$ with $N$ hidden neurons, where $\sigma(z) := \max(z, 0)$ is ReLU activation.
Let $\theta_\source^0=((w_{ij})_{ij}, (v_i)_i)$ and $\theta_\transf^0=((w'_{ij})_{ij}, (v'_i)_i)$ be initialized by Kaiming uniform initialization~\citep{he2015delving}, i.e., $w_{ij},w'_{ij}\sim U([-\frac{1}{\sqrt{d}},\frac{1}{\sqrt{d}}])$ and $v_{i},v'_{i}\sim U([-\frac{1}{\sqrt{N}},\frac{1}{\sqrt{N}}])$. Then it is theoretically shown that Assumption (P) holds at initialization with high probability when the hidden dimension $N$ is sufficiently large:
\begin{theorem}[See Theorem~\ref{appendix:theorem:validating assumption at initialization} for details]
Given two pairs of randomly initialized parameters $(w, v)$ and $(w', v')$, with high probability, there exists a permutation symmetry $\pi \in S_N$ such that the normalized distance between the expected gradients $\mathbb{E}_{(x,y)} [\nabla_{w,v} \mathcal{L}]$ and $\mathbb{E}_{(x,y)} [\nabla_{w'',v''} \mathcal{L}]$, where $(w'', v'')$ is the permuted one of $(w', v')$ by $\pi$, can be arbitrarily small when $N$ is sufficiently large.
\end{theorem}
Although this result only validates Assumption (P) at initialization, we can naturally expect that same similarity holds for more iterations ($t>0$) due to (almost everywhere) smoothness of neural networks with respect to parameters.
Figure~(\ref{figure:assumption:2-mlp on mnist}) empirically validates this expectation.

Also, we empirically investigate the case of modern network architectures in Figure~(\ref{figure:assumption:conv8 on cifar}), (\ref{figure:assumption:resnet18 on imagenet}).
The results show that the similarity between learning trajectories heavily depends on network architecture and type of initialization (i.e., random or pre-trained) rather than datasets.
Unfortunately, we can see that, in such modern network architectures, Assumption (P) holds weakly or may be broken after hundreds of iterations.
However, in this paper, Assumption (P) still plays an important role for deriving our algorithm to solve the main problem $\mathcal{P}_T$, which empirically works well even with these architectures as we will see in Section~\ref{section:experiments}.
It remains for future work to make the assumption more appropriate for these modern architectures.

\begin{figure}
    \centering
    \begin{subfigure}[t]{0.32\textwidth}
        \centering
        \includegraphics[width=\textwidth]{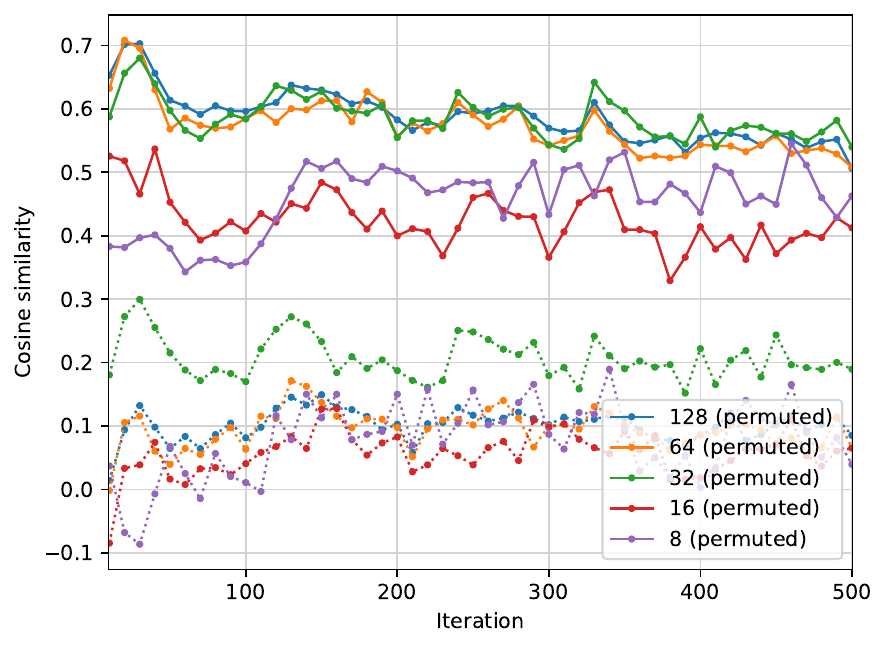}
        \caption{2-MLP on MNIST.}\label{figure:assumption:2-mlp on mnist}
    \end{subfigure}
    \hfill
    \begin{subfigure}[t]{0.32\textwidth}
        \centering
        \includegraphics[width=\textwidth]{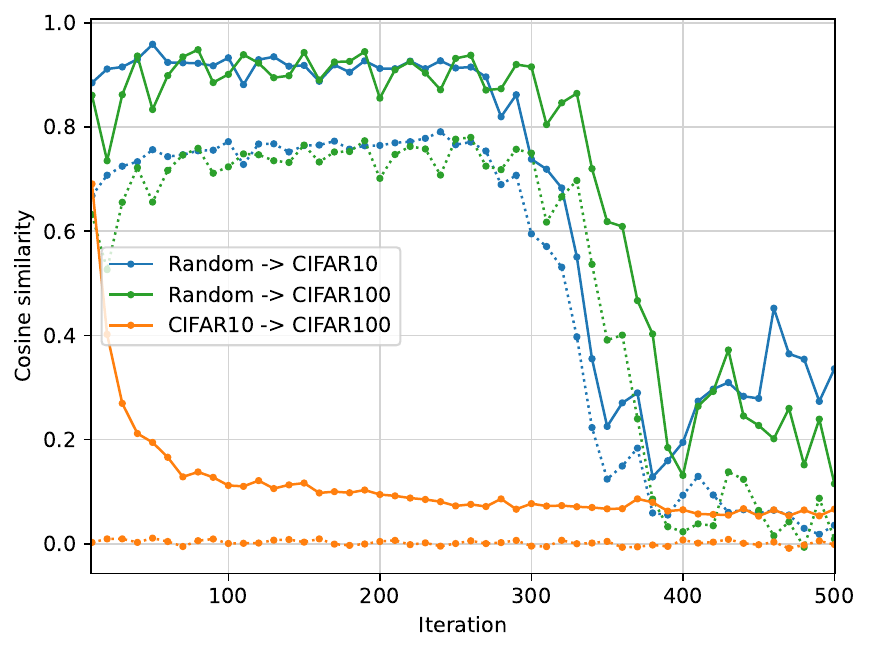}
        \caption{Conv8 on CIFAR-10/100.}\label{figure:assumption:conv8 on cifar}
    \end{subfigure}
    \hfill
    \begin{subfigure}[t]{0.32\textwidth}
        \centering
        \includegraphics[width=\textwidth]{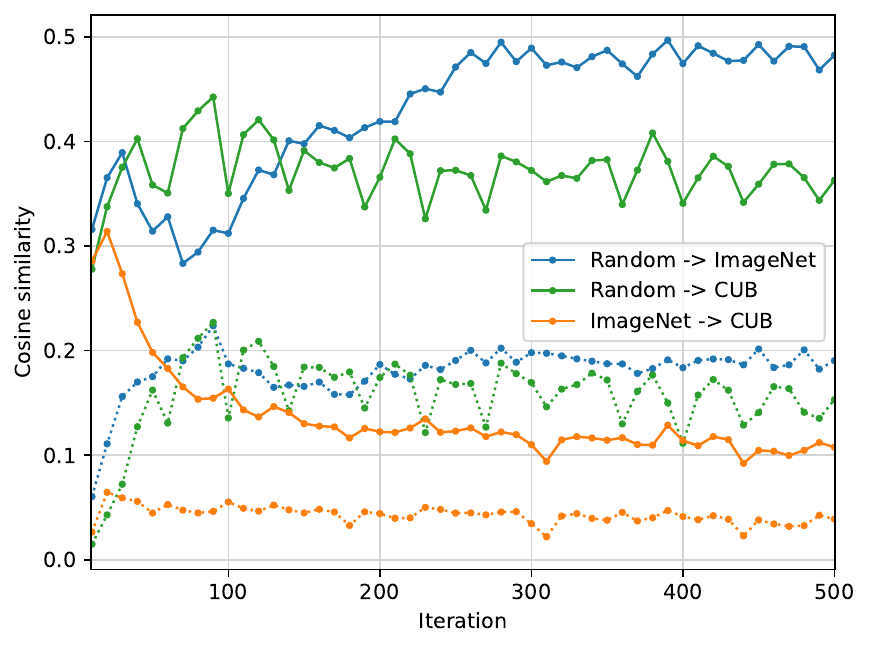}
        \caption{ResNet18 on ImageNet/CUB.}\label{figure:assumption:resnet18 on imagenet}
    \end{subfigure}
    \caption{ We evaluate cosine similarities between $\pi(\theta_\source^t - \theta_\source^{t-1})$ and $\theta_\transf^t - \theta_\transf^{t-1}$, where $\theta_\source^t$ or $\theta_\transf^t$ represents a parameter trained for $10t$ iterations, averaged over timesteps $t$. Solid lines are plotted for the solution $\pi$ of equation~(\ref{naive optimization problem}), and dotted lines are for $\pi$ being the identity transformation (i.e., the case not permuted) as a baseline. Fig.(\ref{sub@figure:assumption:2-mlp on mnist}): Experiments using 2-layered MLP with various hidden dimensions. Over-parameterization leads to higher cosine similarities of learning trajectories.  Fig.(\ref{sub@figure:assumption:conv8 on cifar}, \ref{sub@figure:assumption:resnet18 on imagenet}): The label $X\to Y$ means that the initialization is random or  pre-trained on $X$, and then trained on $Y$. The cosine similarities depend on model architectures and initialization-type rather than datasets. }\label{figure:empirical validation of assumption (P)}
    \vspace{-3mm}
\end{figure}

\subsection{Algorithm: gradient matching along trajectory}\label{section:algorithm}

Now our goal is to solve the optimization problem $\mathcal{P}_T$ (eq.~\ref{gradient matching problem}).
However, the problem $\mathcal{P}_T$ seems hard to solve directly because the variable $\pi$ appears non-linearly in the second term $\nabla_{\theta^t_{\transf, \pi}}\mathcal{L}$.
To avoid the non-linearity, we introduce a sequence of linear sub-problems $\{\mathcal{P}'_{s}\}_{1\leq s \leq T}$ whose solution converges to the solution for $\mathcal{P}_T$. For each $s\in \{1,\cdots, T\}$, we consider the following problem:
\begin{equation}\label{eq:linear sub-problems}
    \mathcal{P}'_s: \ \  \min_{\pi_s} \sum_{t=0}^{s-1} \left\lVert 
    \pi_{s} \nabla_{\theta_\source^t}\mathcal{L} - \nabla_{\theta_{\transf, \pi_{s-1}}^t}\mathcal{L}
    \right\rVert_2^2
\end{equation}
Since the second term in $\mathcal{P}'_s$ uses the solution $\pi_{s-1}$ for the previous sub-problem $\mathcal{P}'_{s-1}$, the unknown variable $\pi_s$ appears only in the first term $\pi_s\nabla_{\theta^1_\source} \mathcal{L}$ in a linear way.
Moreover, the following lemma implies that the final solution $\pi_T$ from the sequence $\{\mathcal{P}'_s\}_{1\leq s \leq T}$ approximates the solution for the original problem $\mathcal{P}_T$:
\begin{lemma}\label{lemma:consistency of subproblems}
Under some regularity assumption, we have $\theta^t_{2,\pi_{s}} \approx \theta^t_{2,\pi_{s'}}$ for $0\leq t \leq s < s'$.
\end{lemma}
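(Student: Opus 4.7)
The plan is to exploit the fact that, because the permutation action on $\Theta$ is linear, the recursion $\theta^t_{\transf,\pi} = \theta^{t-1}_{\transf,\pi} + \pi(\theta_\source^t - \theta_\source^{t-1})$ telescopes to $\theta^t_{\transf,\pi} = \theta^0_\transf + \pi(\theta_\source^t - \theta_\source^0)$. Hence for any two solutions $\pi_s$, $\pi_{s'}$ of the sub-problems,
\begin{equation*}
\theta^t_{\transf,\pi_s} - \theta^t_{\transf,\pi_{s'}} \;=\; (\pi_s - \pi_{s'})\bigl(\theta_\source^t - \theta_\source^0\bigr),
\end{equation*}
so the claim reduces to showing that $\pi_s$ and $\pi_{s'}$ act nearly identically on the span of the source increments $\{\theta_\source^t - \theta_\source^0\}_{t \le s}$.

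Next, I would proceed by induction on $s' - s$, reducing the general case to comparing consecutive sub-problems $\mathcal{P}'_s$ and $\mathcal{P}'_{s+1}$. The objective of $\mathcal{P}'_{s+1}$ differs from that of $\mathcal{P}'_{s}$ in two ways: (i) the target gradients $\nabla_{\theta^t_{\transf,\pi_s}}\mathcal{L}$ for $t < s$ replace those evaluated at $\pi_{s-1}$, and (ii) one additional summand at $t = s$ is appended. Under a regularity hypothesis that combines Lipschitzness of $\theta \mapsto \nabla_\theta \mathcal{L}$ with well-conditioning of the (relaxed) quadratic assignment objective, a standard perturbation-of-minimizers argument should yield a bound of the form
\begin{equation*}
\| \pi_{s+1} - \pi_s \| \;\le\; C\,\|\pi_s - \pi_{s-1}\| \;+\; \varepsilon_s,
\end{equation*}
where the first term transports the previous discrepancy (via (i) and the linearity observation above) and $\varepsilon_s$ captures the new summand in (ii), shrinking to zero as Assumption~(P) becomes tight. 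Telescoping in $s$ then controls $\pi_s - \pi_{s'}$, and substituting into the first display gives the lemma.

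The main obstacle will be making the perturbation step rigorous, since each $\pi_s$ is forced to live in the discrete group $G$ and naive continuity arguments fail. I would handle this by either (a) lifting the sub-problems to the Birkhoff polytope of doubly stochastic matrices, where the quadratic objective is smooth and, under mild non-degeneracy, admits a unique minimizer that depends Lipschitz-continuously on the cost matrix, and then transferring the stability to the rounded permutation; or (b) exploiting the coordinate-descent characterization recalled in Section~\ref{section:preliminaries:parameter alignment by permutation}, so that small perturbations of each per-layer cost $Z_i \sigma_{i-1} W_i^\top + Z_{i+1}^\top \sigma_{i+1} W_{i+1}$ leave the optimal assignment unchanged whenever the assignment gap is sufficiently large. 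Pinning down the minimal regularity condition under which one of these two routes goes through quantitatively, rather than merely heuristically, is where the real work lies.
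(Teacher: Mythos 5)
Your telescoping step $\theta^t_{\transf,\pi} = \theta^0_\transf + \pi(\theta_\source^t - \theta_\source^0)$ is exactly where the paper's proof also begins, but from there you take a genuinely different and substantially harder road, and that road has a gap. You reduce the lemma to bounding $\lVert \pi_s - \pi_{s'} \rVert$ as operators, which forces you to confront the stability of minimizers of a discrete assignment problem. You correctly identify this as the crux, but neither of your proposed fixes is carried out, and neither is likely to go through without strong non-degeneracy hypotheses: linear assignment problems generically have near-ties, so the argmin is not a continuous function of the cost matrix, and the constant $C$ in your recursion $\lVert\pi_{s+1}-\pi_s\rVert \le C\lVert\pi_s-\pi_{s-1}\rVert + \varepsilon_s$ is never shown to be less than one, so the telescoping may compound rather than control the error.

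The paper avoids all of this by never comparing the permutations themselves. It only needs the weaker statement that $\pi_s$ and $\pi_{s'}$ act approximately identically on the specific vectors $\theta_\source^t - \theta_\source^0$, and it obtains this by routing both actions through a common intermediate quantity. Concretely, it assumes (1) the source increments are approximate gradient steps, $\theta_\source^{t+1}-\theta_\source^{t} \approx -\alpha_t \nabla_{\theta_\source^t}\mathcal{L}$; (2) each sub-problem is solved to small residual, $\lVert \pi_s \nabla_{\theta_\source^t}\mathcal{L} - \nabla_{\theta^t_{\transf,\pi_{s-1}}}\mathcal{L}\rVert_2 < \varepsilon$ for $t\le s-1$; and (3) $K$-Lipschitz continuity of $\nabla_\theta\mathcal{L}$. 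Then $\pi_{s'}(\theta_\source^t-\theta_\source^0) \approx -\sum_{t'} \alpha_{t'}\,\pi_{s'}\nabla_{\theta_\source^{t'}}\mathcal{L} \approx -\sum_{t'}\alpha_{t'}\nabla_{\theta^{t'}_{\transf,\pi_{s'-1}}}\mathcal{L}$, the induction hypothesis plus Lipschitzness replaces $\pi_{s'-1}$ by $\pi_{s-1}$ in the subscript, and running the same chain backwards recovers $\pi_s(\theta_\source^t-\theta_\source^0)$, with accumulated error $O(T^{s'}K^{s'}\varepsilon)$. Assumption (2) does the work your perturbation-of-minimizers argument was meant to do, and it is a far cheaper hypothesis: it asserts only that the achieved objective value is small, not that the argmin is stable under perturbation. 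If you want to salvage your route, replace the operator-norm comparison of $\pi_s$ and $\pi_{s'}$ with a comparison of their values on the gradients appearing in the objective; as stated, your plan proves a stronger claim than the lemma requires, and that stronger claim is the source of the difficulty.
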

The proof will be given in Appendix.
Indeed, by this approximation, we find out that the solution $\pi_T$ for the last sub-problem $\mathcal{P}'_T$ minimizes
\begin{eqnarray}
    \sum_{t=0}^{T-1} \left\lVert 
    \pi_{T} \nabla_{\theta_\source^t}\mathcal{L} - \nabla_{\theta_{\transf, \pi_{T-1}}^t}\mathcal{L}
    \right\rVert_2^2 
    \approx 
    \sum_{t=0}^{T-1} \left\lVert 
    \pi_{T} \nabla_{\theta_\source^t}\mathcal{L} - \nabla_{\theta_{\transf, \pi_{T}}^t}\mathcal{L}
    \right\rVert_2^2,
\end{eqnarray}
where the right-hand side is nothing but the objective of the original problem $\mathcal{P}_T$.

Algorithm~\ref{algorithm:gradient matching along trajectory} gives a step-by-step procedure to obtain the transferred learning trajectory $(\theta_\transf^1, \cdots, \theta_\transf^T)$ by solving the sub-problems $\{\mathcal{P}'_s\}_{0\leq s \leq T}$ sequentially.
In lines 2-6, it computes an average of gradients $\nabla_{\theta}\mathcal{L}$ over a single mini-batch for each $\theta = \theta^{t-1}_\source, \theta^{t-1}_\transf, (1\leq t \leq s)$, which is required in the $s$-th sub-problem $\mathcal{P}'_s$ (eq.~\ref{eq:linear sub-problems}).
In line 7, the $s$-th permutation $\pi_s$ is obtained as a solution of the sub-problem $\mathcal{P}'_s$, which can be solved as a linear optimization (eq.~\ref{linear sum assignment problem}) using the coordinate descent algorithm proposed in \citet{ainsworth2023git}.
Then we update the transferred parameter $\theta_\transf^t$ for $t=1,\cdots,s$ in line 8.

\begin{figure}[htbp]
    \centering
    \begin{minipage}[t]{0.48\linewidth}
        \begin{algorithm}[H]
            \small
            \caption{Gradient Matching along Trajectory ({\bf GMT})}\label{algorithm:gradient matching along trajectory}
            \begin{algorithmic}[1]
                \Require $(\theta_\source^{0},\cdots,\theta_\source^{T}) \in \mathbb{R}^{n\times (T+1)}$, $\theta_\transf^{0}\in\mathbb{R}^n$
                \For{$s=1,\cdots,T$}
                    \For{$t=1,\cdots,s$}
                        \State Sample $(x_1,y_1),\cdots,(x_b,y_b)\sim\mathcal{D}$.
                        \State $g_1^{t} \gets \frac{1}{b} \sum_{i=1}^{b} \nabla_{\theta_1^{t-1}} \mathcal{L}(f(x_i;\theta_1^{t-1}),y_i)$
                        \State $g_2^{t} \gets \frac{1}{b} \sum_{i=1}^{b} \nabla_{\theta_2^{t-1}} \mathcal{L}(f(x_i;\theta_2^{t-1}),y_i)$ 
                    \EndFor
                    \State $\pi_s \gets  \argmin_{\pi} \sum_{t=1}^s \lVert g_2^{t} - \pi g_1^{t} \rVert_2^2$
                    \For{$t=1,\cdots,s$}
                        \State $\theta_2^{t} \gets \theta_2^{t-1} + \pi_s(\theta_1^{t} - \theta_1^{t-1})$
                    \EndFor
                \EndFor
                \State return $(\theta_2^{1},\cdots,\theta_2^{s})$
                \end{algorithmic}
        \end{algorithm}
    \end{minipage}
    \hfill
    \begin{minipage}[t]{0.48\linewidth}
        \begin{algorithm}[H]
            \small
            \begin{algorithmic}[1]
                \Require $(\theta_\source^{0},\cdots,\theta_\source^{T}) \in \mathbb{R}^{n\times (T+1)}$, $\theta_\transf^{0}\in\mathbb{R}^n$
                \For{$s=1,\cdots,T$}
                    \State Sample $(x_1,y_1),\cdots,(x_b,y_b)\sim\mathcal{D}$.
                    \State $g_1^{t} \gets \frac{1}{b} \sum_{i=1}^{b} \nabla_{\theta_1^{s-1}} \mathcal{L}(f(x_i;\theta_1^{s-1}),y_i)$
                    \State $g_2^{t} \gets \frac{1}{b} \sum_{i=1}^{b} \nabla_{\theta_2^{s-1}} \mathcal{L}(f(x_i;\theta_2^{s-1}),y_i)$ 
                    \State $\pi_s \gets  \argmin_{\pi} \sum_{t=1}^s \lVert g_2^{t} - \pi g_1^{t} \rVert_2^2$
                    \For{$t=1,\cdots,s$}
                        \State $\theta_2^{t} \gets \theta_2^{t-1} + \pi_s(\theta_1^{t} - \theta_1^{t-1})$
                    \EndFor
                \EndFor
                \State return $(\theta_2^{1},\cdots,\theta_2^{s})$
                \end{algorithmic}
            \caption{Fast Gradient Matching along Trajectory ({\bf FGMT})}\label{algorithm:fast gradient matching along trajectory}
        \end{algorithm}
    \end{minipage}
    \vspace{-2.5mm}
\end{figure}

\subsection{Additional techniques}\label{section:additional techniques}

While Algorithm~\ref{algorithm:gradient matching along trajectory} solves the learning transfer problem~(eq.~\ref{gradient matching problem}) approximately, it still has some issues in terms of storage and computation cost. Here we explain two practical techniques to resolve them.

\paragraph{Linear trajectory.}

In terms of the storage cost, Algorithm~\ref{algorithm:gradient matching along trajectory} requires a capacity of $T+1$ times the model size to keep a learning trajectory of length $T$, which will be a more substantial issue as model size increases or the trajectory becomes fine-grained.
To reduce the required storage capacity, instead of keeping the entire trajectory, we propose to imitate it by linearly interpolating the end points.
In other words, given an initial parameter $\theta_1^0$ and the final $\theta_1^T$, we define a new trajectory $[\theta_1^0:\theta_1^T] := (\theta_1^{0}, \cdots, \theta_1^t,\cdots, \theta_1^{T})$ with $\theta_1^t := (1-\lambda_t)\theta_1^{0} + \lambda_t\theta_1^T$ and $0 = \lambda_0 \leq \cdots \leq \lambda_t \leq \cdots \leq \lambda_T = 1$.
\begin{wrapfigure}{tr}{0.325\textwidth}
  \vspace{-2mm}
      \includegraphics[width=\linewidth]{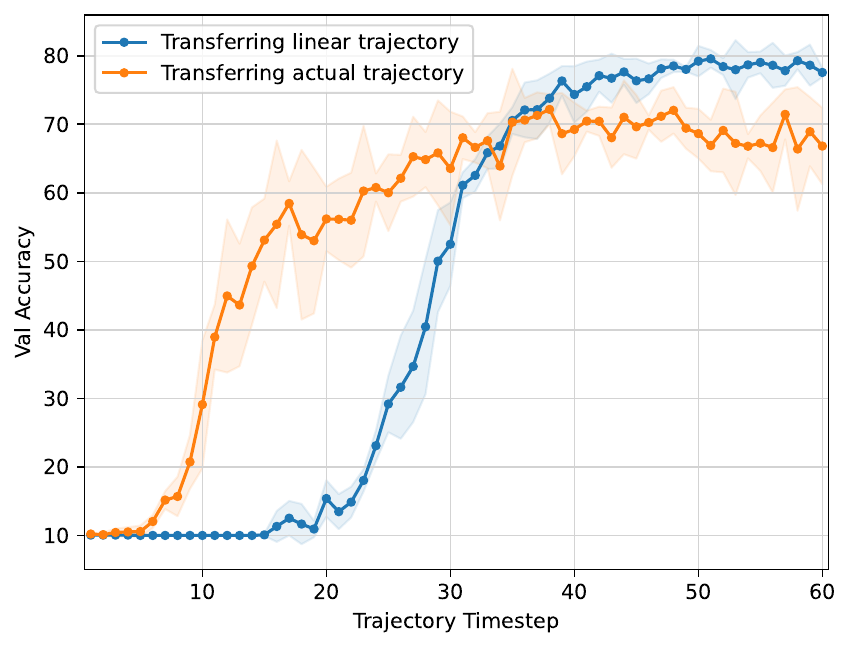}
      \vspace{-5mm}
  \caption{\small Linear vs. actual trajectory (Conv8 on CIFAR-10).}
  \label{figure:linear vs actual trajectory}
  \vspace{-3mm}
\end{wrapfigure}
Previous studies on the monotonic linear interpolation~\citep{goodfellow2015qualitatively,frankle2020revisiting} indicate that such a linearly interpolated trajectory satisfy our definition of learning trajectories in SGD training of modern deep neural networks.
Throughout this paper, we employ uniform scheduling for $\lambda_t$ defined by $\lambda_{t+1} - \lambda_t := 1/T$.
Next, we compare the transferred results between the linear trajectory $[\theta_1^0:\theta_1^T]$ and the actual trajectory $(\theta_\source^0,\cdots,\theta_\source^T)$ where each $\theta_\source^t$ is a checkpoint at the $t$-th training epoch on CIFAR-10 with $T=60$.
Interestingly, the transfer of the linear trajectory is more stable and has less variance than the transfer of the actual one.
This may be because the actual trajectory contains noisy information while the linear trajectory is  directed towards the optimal solution $\theta_\source^T$.
Due to its storage efficiency and stability in accuracy, we employ the linear trajectory of the length $T \leq 40$ throughout our experiments in Section~\ref{section:experiments}.

\vspace{-2mm}
\paragraph{Gradient caching.}

In terms of the computation cost, Algorithm~\ref{algorithm:gradient matching along trajectory} requires $O(T^2)$ times gradient computation.
To reduce the number of gradient computation, we propose to cache the gradients once computed instead of re-computing them for every $s=1,\cdots,T$.
In fact, the cached gradients $\nabla_{\theta_{2,\pi_s}^{t-1}} \mathcal{L}$ and the re-computed gradients $\nabla_{\theta_{2,\pi_{s'}}^{t-1}} \mathcal{L}$ are not the same quantity exactly since the intermediate parameter $\theta_{2,\pi_s}^{t-1}=\theta_2^{0}+\pi_s (\theta_1^{t-1} - \theta_1^{0})$ takes different values for each $s$. However, they can be treated as approximately equal by Lemma~\ref{lemma:consistency of subproblems} if we assume the continuity of the gradients.
Now we can reduce the number of gradient computation from $O(T^2)$ to $O(T)$ by caching the gradients once computed.
We describe this computationally efficient version in Algorithm~\ref{algorithm:fast gradient matching along trajectory}.

\begin{figure}
    \centering
    \begin{subfigure}[t]{0.325\textwidth}
        \centering
        \includegraphics[width=\textwidth]{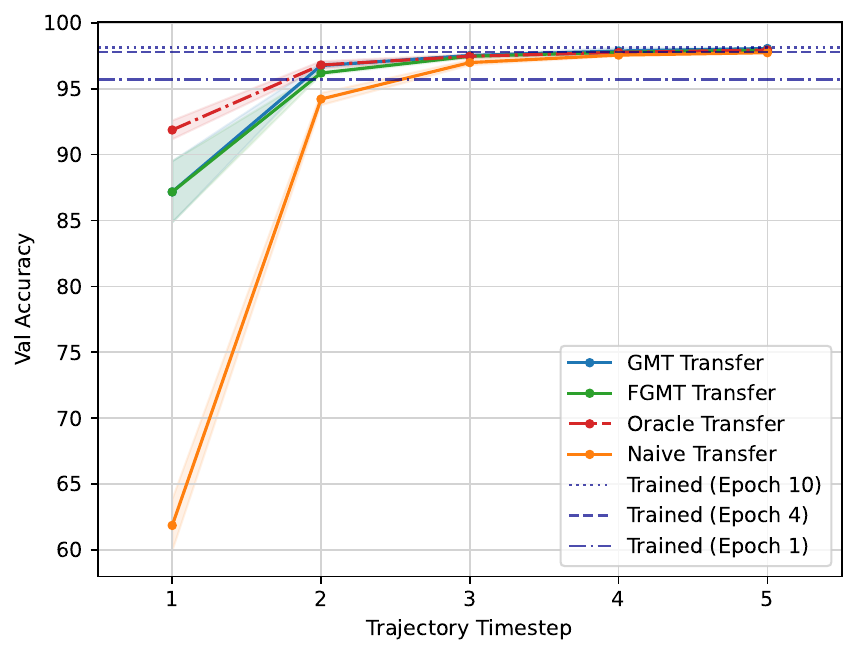}
        \caption{MNIST (2-MLP)}\label{figure: mnist with mlp}
    \end{subfigure}
    \hfill
    \begin{subfigure}[t]{0.325\textwidth}
        \centering
        \includegraphics[width=\textwidth]{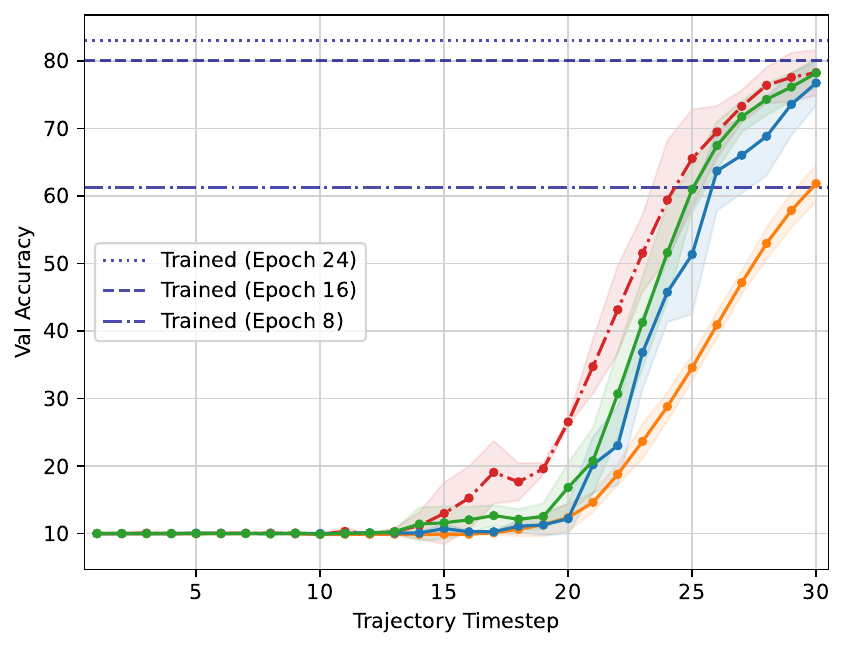}
        \caption{CIFAR-10 (Conv8)}\label{figure: cifar10 with conv8}
    \end{subfigure}
    \hfill
    \begin{subfigure}[t]{0.325\textwidth}
        \centering
        \includegraphics[width=\textwidth]{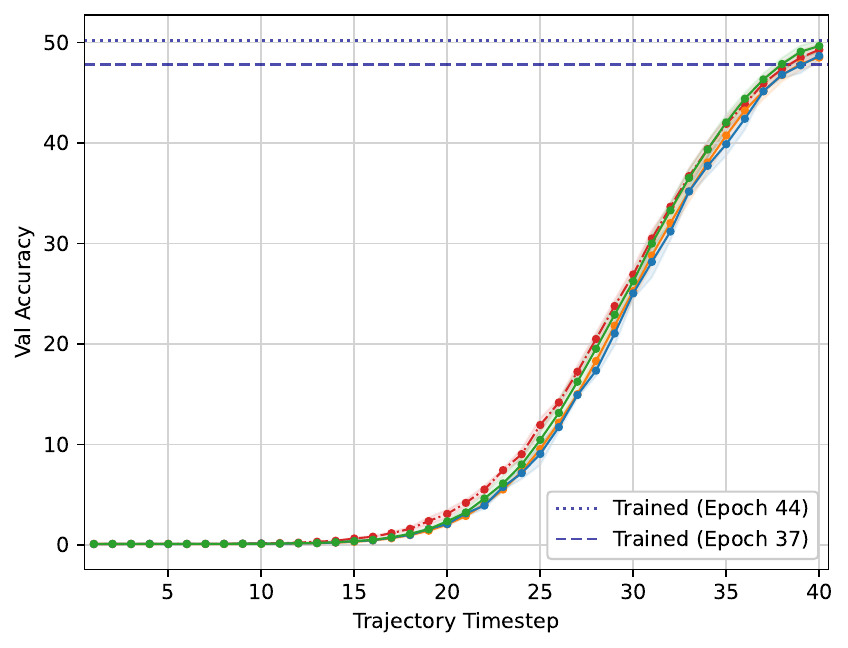}
        \caption{ImageNet (ResNet-18)}\label{figure: imagenet with resnet18}
    \end{subfigure}
    \vskip\baselineskip
    \begin{subfigure}[t]{0.325\textwidth}
        \centering
        \includegraphics[width=\textwidth]{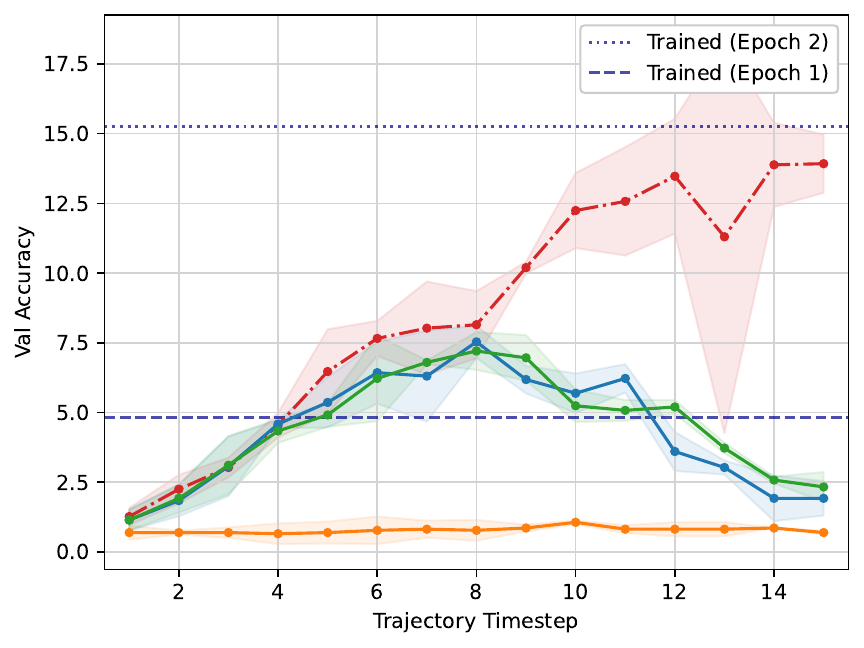}
        \caption{ImageNet $\to$ Cars}\label{figure: imagenet to cars}
    \end{subfigure}
    \begin{subfigure}[t]{0.325\textwidth}
        \centering
        \includegraphics[width=\textwidth]{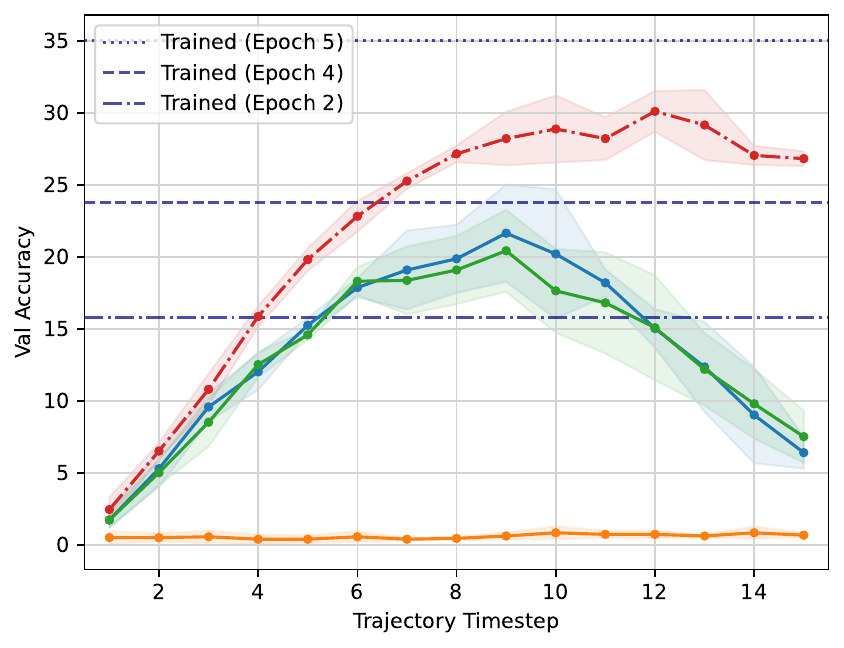}
        \caption{ImageNet $\to$ CUB}\label{figure: imagenet to cub}
    \end{subfigure}
    \begin{subfigure}[t]{0.325\textwidth}
      \centering
      \includegraphics[width=\textwidth]{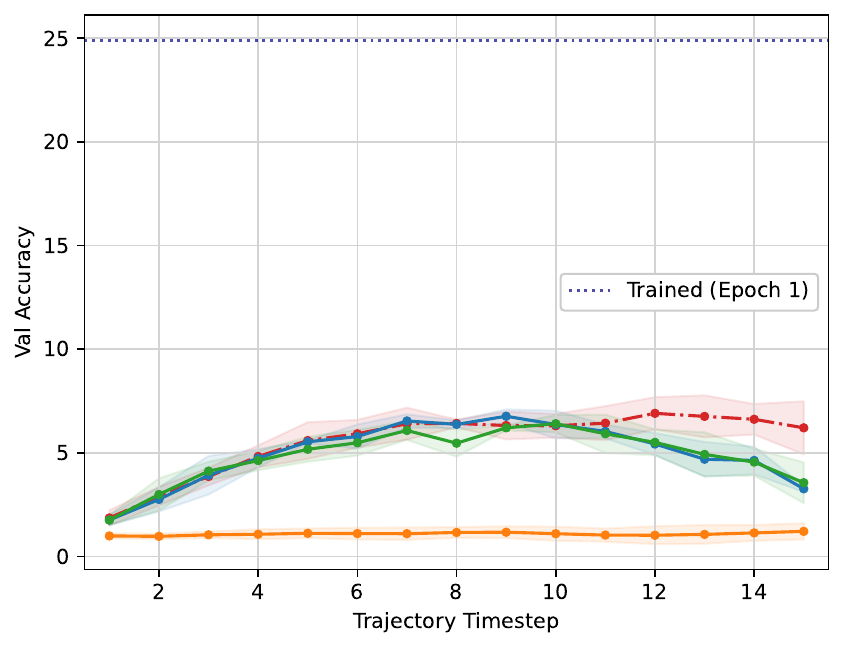}
      \caption{CIFAR-10 $\to$ CIFAR-100}\label{figure: cifar10 to cifar100}
    \end{subfigure}
  \caption{We plot the validation accuracies of the transferred parameter $\theta_{\transf,\pi_t}^t$ for each $t=1,\cdots,T$ with various datasets and NN architectures. We also provide the standard deviation over three runs for each experiment. The dotted bars show accuracies in standard training. (Upper) Transfer of a learning trajectory on a single dataset between random initial parameters. (Lower) Transfer of a fine-tuning trajectory between pre-trained parameters. For example, "ImageNet $\to$ Cars" means the transfer of the fine-tuning trajectory on the Cars dataset between the parameters pre-trained on ImageNet. }
    \label{figure:transfer between random and pretrained inits}
    \vspace{-1mm}
\end{figure}

\section{Experiments}\label{section:experiments}

In this section, we empirically evaluate how learning transfer works on standard vision datasets.
First, we compare our proposed methods ({\bf GMT}, {\bf FGMT}) and two baselines ({\bf Naive}, {\bf Oracle}), which are explained below, under the following two scenarios: (1) transferring learning trajectories starting from randomly initialized parameters and (2) transferring learning trajectories starting from pre-trained parameters (Section~\ref{section:experiments:learning transfer}).
Next, we evaluate how efficiently the transferred parameters can be trained in their subsequent training. (Section~\ref{section:experiments:acceleration}).
Finally, we investigate what properties the transferred parameters inherit from the target initializations from viewpoints of loss landscape and generalization (Section~\ref{section:what is being inherited}).
The details on experimental settings are provided in Appendix~\ref{appendix:details on experiments}.

\vspace{-3mm}
\paragraph{Baselines.}

As baselines for learning transfer, we introduce two natural methods: {\bf Naive} and {\bf Oracle}.
Both in the two baselines, we transfer a given learning trajectory $(\theta_1^0,\cdots,\theta_1^T)$ by a single permutation $\pi_{\naive}$ or $\pi_{\oracle}$, according to the problem formulation in Section~\ref{section:problem formulation}.
In the Naive baseline, we define $\pi_{\naive}$ as the identity permutation, which satisfies $\pi_{\naive} \theta = \theta$.
In other words, the transferred parameter by Naive is simply obtained as $\theta_{\transf,\pi_\naive}^t = \theta_\transf^0 + (\theta_\source^t - \theta_\source^0)$.
On the other hand, in the Oracle baseline, we first obtain a true parameter $\theta_\transf^T$ by actually training the given initial parameter $\theta_\transf^0$ with the same optimizer as training of $\theta_\source^T$.
Then we define $\pi_{\oracle}$ by minimizing the layer-wise $L^2$ distance between the actually trained trajectories $\theta_\transf^T-\theta_\transf^0$ and $\pi_\oracle(\theta_\source^T-\theta_\source^0)$, where we simply apply the coordinate descent as explained in Section~\ref{section:preliminaries:parameter alignment by permutation}.
The Oracle baseline is expected to be close to the optimal solution for the learning transfer problem via permutation symmetry.

\vspace{-3mm}
\paragraph{Source trajectories.}

In our experiments, as discussed in Section~\ref{section:additional techniques}, we consider to transfer linear trajectories $[\theta_\source^0:\theta_\source^T]$ of length $T$ rather than actual trajectories for $\theta_\source^T$ due to the storage cost and instability emerging from noise. The transferred results for actual trajectories instead of linear ones can be found in Appendix~\ref{appendix:section:learning transfer for actual trajectories}.

\subsection{Learning transfer experiments}\label{section:experiments:learning transfer}

Figure~\ref{figure:transfer between random and pretrained inits} shows the validation accuracies of the transferred parameters $\theta_{\transf,\pi_t}^t$ for each timestep $t=1,\cdots,T$ during the transfer.
For the baselines (Naive and Oracle), we set the $t$-th permutation $\pi_t$ by the fixed $\pi_\naive$ and $\pi_\oracle$ for every $t$.
For our algorithms (GMT and FGMT), the $t$-th permutation $\pi_t$ corresponds to $\pi_s$ in Algorithm~\ref{algorithm:gradient matching along trajectory} and \ref{algorithm:fast gradient matching along trajectory}.

In the upper figures \ref{figure: mnist with mlp}-\ref{figure: imagenet with resnet18}, we transfer a learning trajectory trained with a random initial parameter on a single dataset (MNIST~\citep{lecun2010mnist}, CIFAR-10~\citep{krizhevsky2009cifar10} and ImageNet~\citep{deng2009imagenet}) to another random initial parameter.
We will refer to this experimental setting as the random initialization scenario.
We can see that our methods successfully approximate the Oracle baseline.
Also, we can see that FGMT, the fast approximation version of GMT, performs very similarly to or even outperforms GMT.
This is probably because the update of $\pi_t$ affects the previously computed gradients in GMT, but not in FGMT, resulting in the stable behavior of FGMT.

In the lower figures \ref{figure: cifar10 to cifar100}-\ref{figure: imagenet to cub}, we transfer a learning trajectory of fine-tuning on a specialized dataset (a $10$-classes subset of CIFAR-100~\citep{krizhevsky2009cifar10}, Stanford Cars~\citep{krause20133d} and CUB-200-2011~\citep{wah2011caltech}) from an initial parameter that is pre-trained on ImageNet to another pre-trained one.
We refer to this experimental setting as the pre-trained initialization scenario.
This scenario seems to be more difficult to transfer the learning trajectories than the random initialization scenario shown in the upper figures, since the Naive baseline always fails to transfer the trajectories.
We can see that, while our methods behave closely to the Oracle baseline up to the middle of the timestep, the accuracy deteriorates immediately after that.
Nevertheless, the peak accuracies of our methods largely outperform those of the Naive baseline.
By stopping the transfer at the peak points (i.e., so-called early stopping), we can take an advantage of the transferred parameters as we will see in the next section.

\begin{figure}
  \centering
  \begin{subfigure}[t]{0.24\textwidth}
      \centering
      \includegraphics[width=\textwidth]{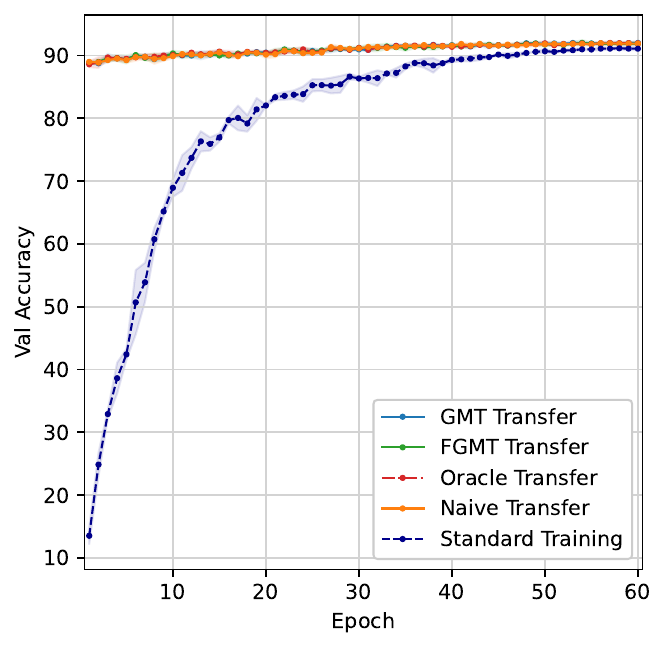}
      \caption{CIFAR-10 (Conv8)}\label{figure:fine-tuning on CIFAR-10 with transferred params}
  \end{subfigure}
  \hfill
  \begin{subfigure}[t]{0.24\textwidth}
      \centering
      \includegraphics[width=\textwidth]{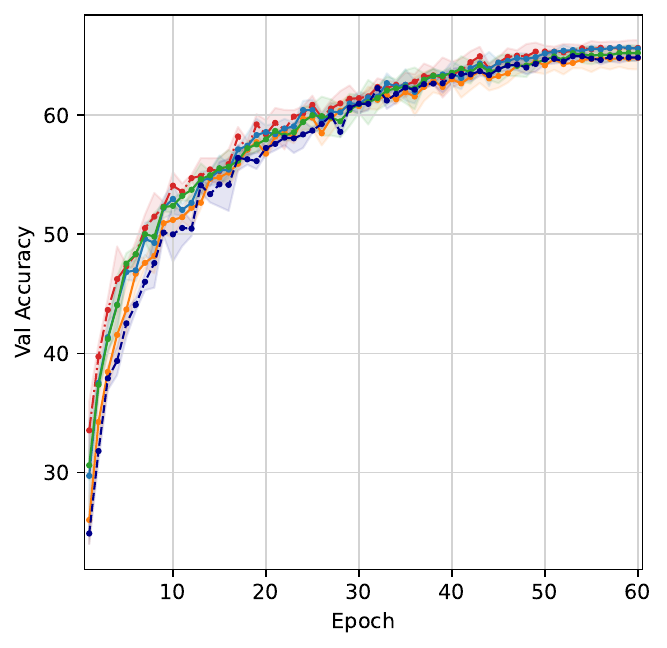}
      \caption{CIFAR10$\to$CIFAR100}\label{figure:fine-tuning on CIFAR-100 with transferred params}
  \end{subfigure}
  \hfill
  \begin{subfigure}[t]{0.24\textwidth}
      \centering
      \includegraphics[width=\textwidth]{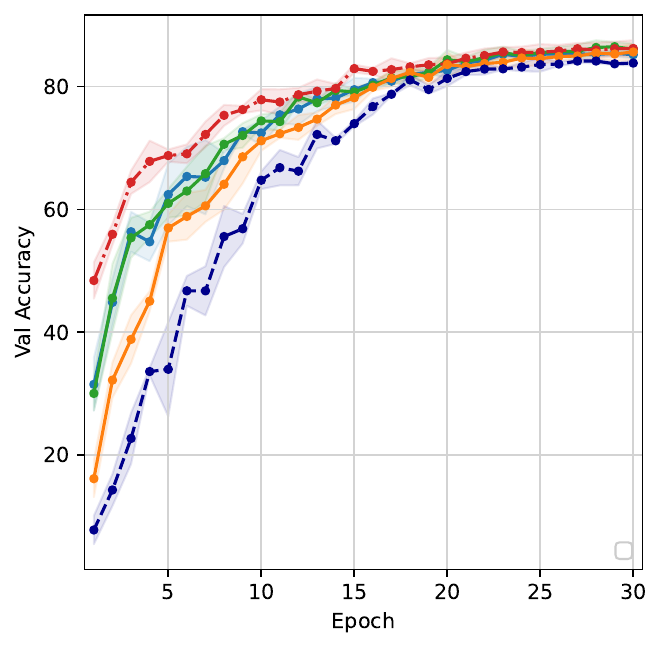}
      \caption{ImageNet $\to$ Cars}\label{figure:fine-tuning on Cars with transferred params}
  \end{subfigure}
  \hfill
  \begin{subfigure}[t]{0.24\textwidth}
      \centering
      \includegraphics[width=\textwidth]{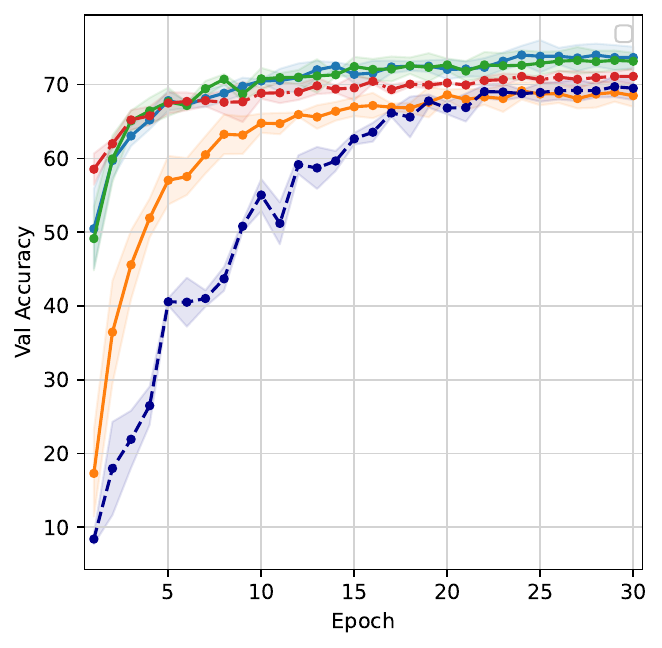}
      \caption{ImageNet $\to$ CUB}\label{figure:fine-tuning on CUB with transferred params}
  \end{subfigure}
  \vspace{-2mm}
  \caption{Subsequent training of the transferred parameters. In each figure, we plot the validation accuracies during the training of the transferred parameters, on the same dataset as the source trajectory being trained on. The transferred parameters obtained by solving the equation~(\ref{gradient matching problem}) can be trained faster than standard training, and Naive baseline in pre-trained initialization scenario. }\label{figure:fine-tuning of the transferred parameters}
  \vspace{-3mm}
\end{figure}

\subsection{Accelerated training of transferred parameters}\label{section:experiments:acceleration}

In the previous section, we obtained the transferred parameters that achieve non-trivial accuracy without any direct training.
Here we evaluate how efficiently the transferred parameters can be trained in their subsequent training, by training them on the same dataset for same epochs as the source trajectory.
We started each training from the transferred parameter $\theta_{\transf,\pi_t}^t$ at the best trajectory step $t$ in Figure~\ref{figure:transfer between random and pretrained inits}.
Figure~\ref{figure:fine-tuning of the transferred parameters} shows the validation accuracies for each epoch in the training of the transferred parameters.
In all cases, the transferred parameter can be trained faster than standard training from random/pre-trained initializations (denoted by \textbf{Standard Training}).
In the random initialization scenario (\ref{figure:fine-tuning on CIFAR-10 with transferred params}), there seem almost no difference between four transfer methods.
This is because the Naive baseline also achieves non-trivial accuracy already when transferred in this scenario.
On the other hand, in the pre-trained scenarios (\ref{figure:fine-tuning on CIFAR-100 with transferred params}), (\ref{figure:fine-tuning on Cars with transferred params}), (\ref{figure:fine-tuning on CUB with transferred params}), the parameters transferred by our methods and the Oracle baseline learns the datasets faster than the parameters transferred by the Naive baseline.
Thus the benefit of the transferred parameters seems to be greater especially in the pre-trained initialization scenario than in the random initialization scenario.

\subsection{What is being inherited from target initialization?}\label{section:what is being inherited}

In previous sections, we have observed that transferring a given learning trajectory to target initialization can achieve non-trivial accuracy beyond random guessing and indeed accelerate the subsequent training after transfer.
Then the following question arises: what does the transferred parameter differ from the source parameter and inherit from the target initialization? In particular, pre-trained initializations may have their own "character" making them different from each other, due to their generalization ability or prediction mechanism.

\paragraph{Basin in loss landscape.}

In context of transfer learning,
\citet{neyshabur2020being} empirically showed that fine-tuning one pre-trained initialization always leads to the same basin in loss landscape, which enables the fine-tuned models to share similar properties inherited from the same pre-trained initialization.
From this viewpoint, the parameter that is transferred to a target pre-trained initialization and then fine-tuned (referred as \textbf{FGMT+FT}) is expected to arrive at the same basin as the actually fine-tuned parameter (referred as \textbf{Target}) from the same pre-trained initialization.
In Figure~\ref{figure:inheritance of basin in loss landscape}, we empirically validate this expectation, with comparison to the source parameter permuted by Git Re-basin~\citep{ainsworth2023git} which is also designed to linearly mode connected to Target (referred as \textbf{Source (Permuted)}).
We can see that the transferred parameter lives in the nearly same basin as the target one, while there are still mild barriers from the permuted source parameter.
This implies that the transferred parameter inherits similar mechanism \citep{lubana2023mechanistic} from the target initialization  which cannot be obtained by simply permuting the source parameter.
\vspace{-3mm}

\begin{figure}
    \centering
    \begin{subfigure}[t]{0.32\textwidth}
      \centering
      \includegraphics[width=\textwidth]{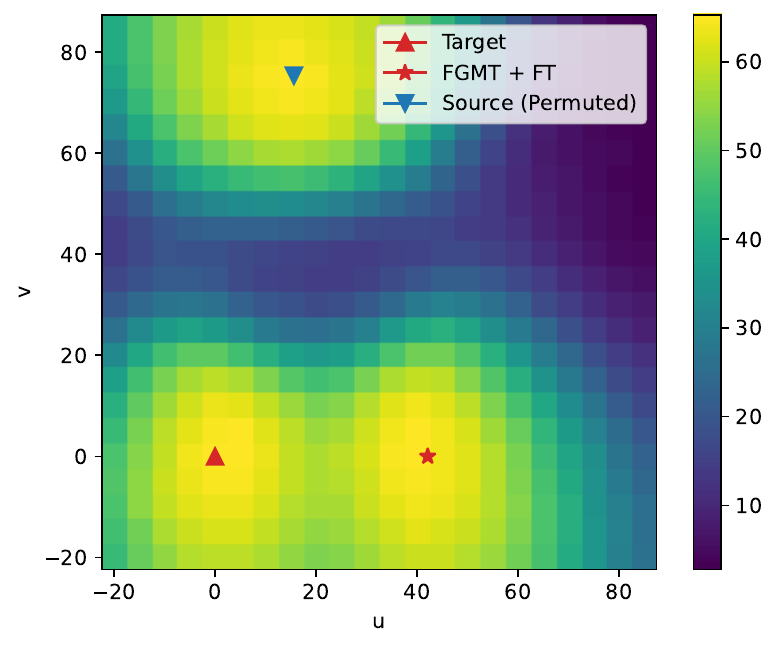}
      \vspace{-6mm}
      \caption{CIFAR10$\to$CIFAR100}
    \end{subfigure}
    \hfill
    \begin{subfigure}[t]{0.32\textwidth}
      \centering
      \includegraphics[width=\textwidth]{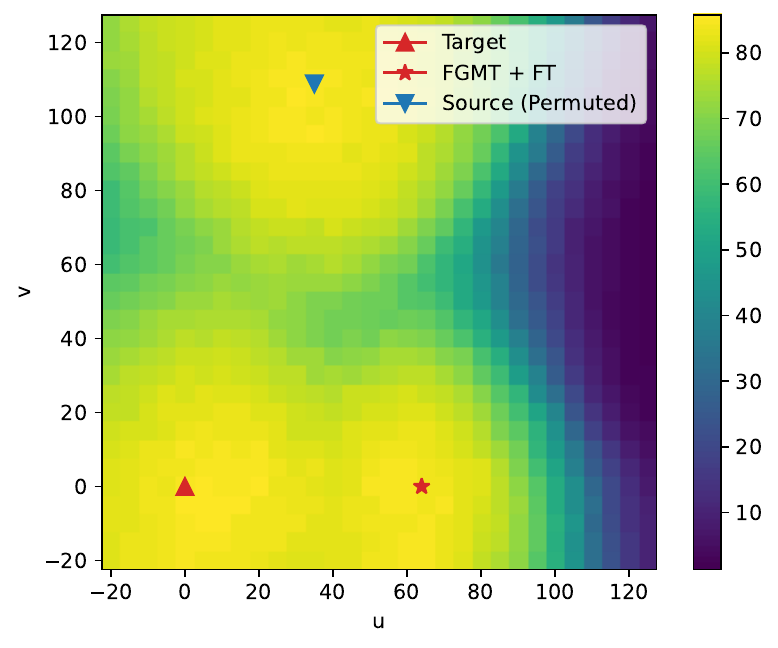}
      \vspace{-6mm}
      \caption{ImageNet$\to$Cars}
    \end{subfigure}
    \hfill
    \begin{subfigure}[t]{0.32\textwidth}
      \centering
      \includegraphics[width=\textwidth]{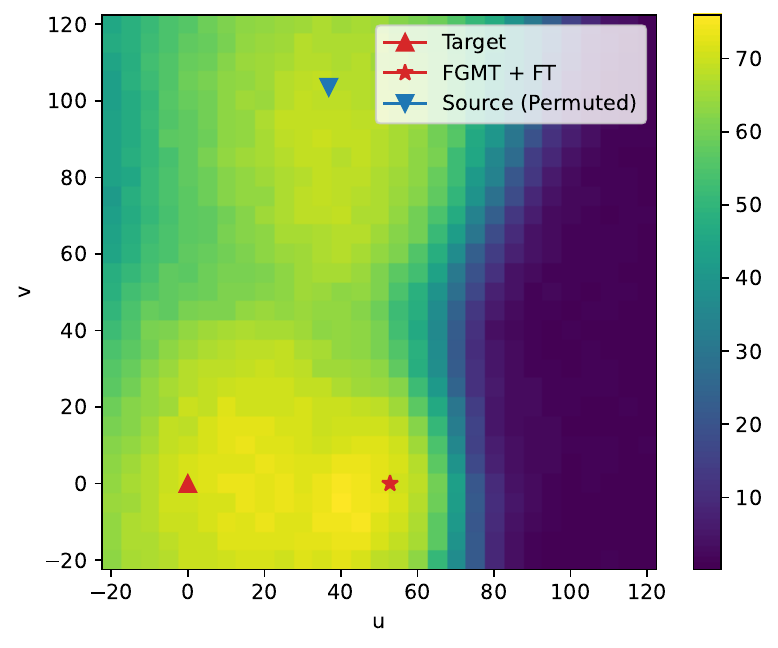}
      \vspace{-6mm}
      \caption{ImageNet$\to$CUB}
    \end{subfigure}
    \vspace{-2mm}
    \caption{Inheritance of basin in loss landscape. We plotted the validation accuracies over the $uv$-plane following the same protocol as \citet{garipov2018loss}. }\label{figure:inheritance of basin in loss landscape}
    \vspace{-2mm}
  \end{figure}
  
  \begin{figure}
    \centering
    \begin{subfigure}[t]{0.49\textwidth}
      \begin{subfigure}[t]{0.49\textwidth}
        \centering
        \includegraphics[width=\textwidth]{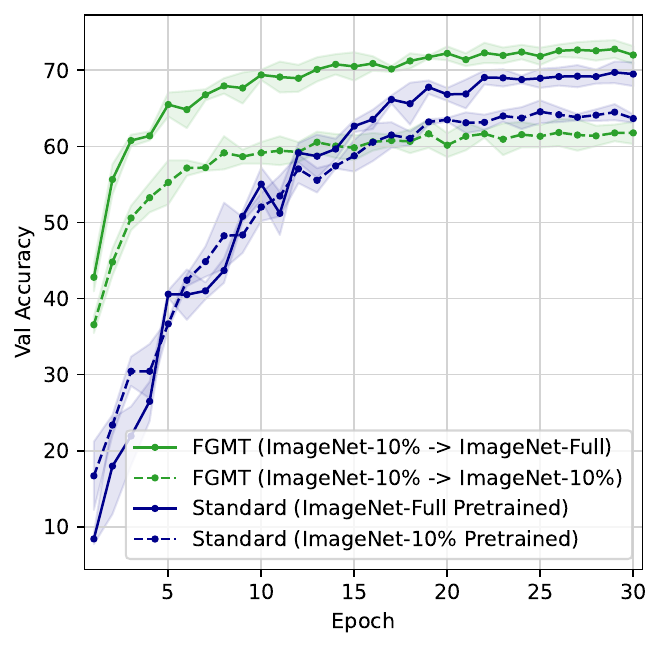}
      \end{subfigure}
      \hfill
      \begin{subfigure}[t]{0.49\textwidth}
          \centering
          \includegraphics[width=\textwidth]{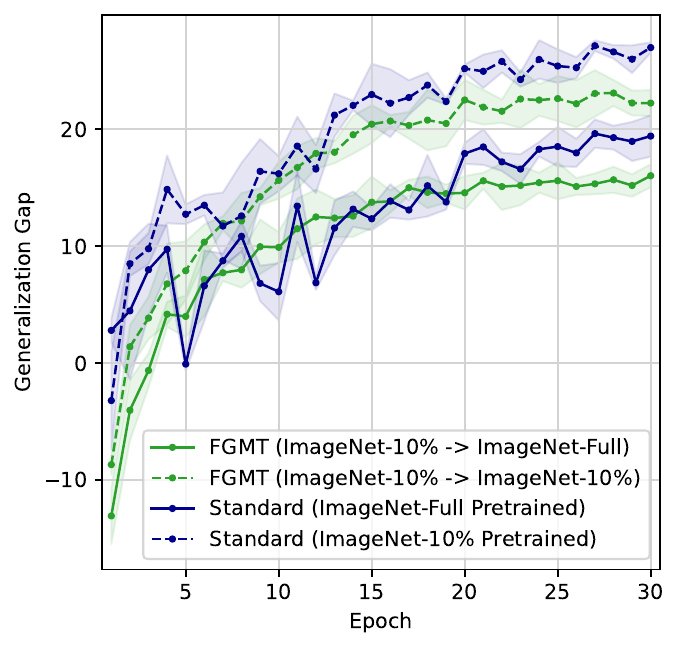}
        \end{subfigure}
        \vspace{-6mm}
        \caption{CUB}
    \end{subfigure}
    \hfill
    \begin{subfigure}[t]{0.49\textwidth}
      \centering
      \begin{subfigure}[t]{0.49\textwidth}
        \includegraphics[width=\textwidth]{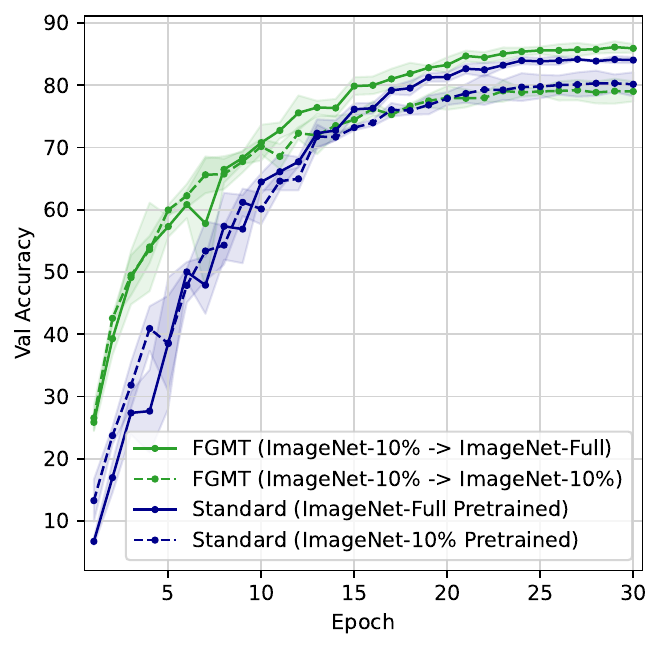}
      \end{subfigure}
      \hfill
      \begin{subfigure}[t]{0.49\textwidth}
        \includegraphics[width=\textwidth]{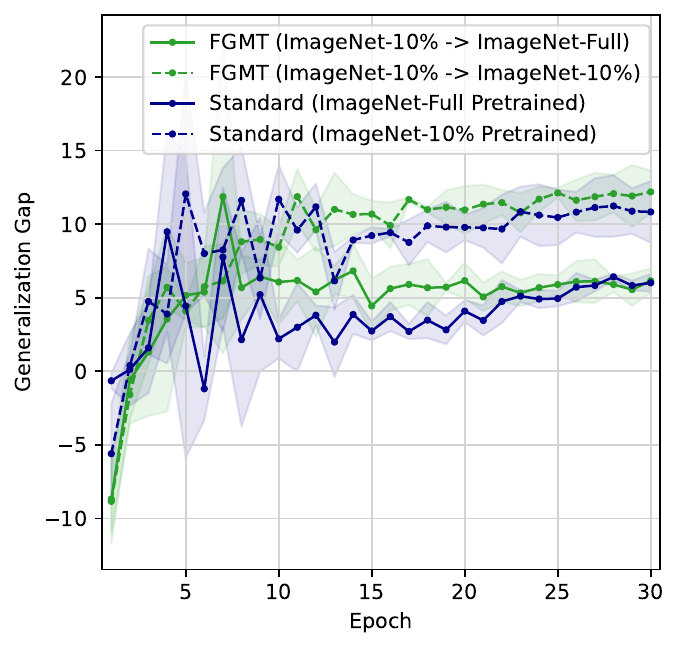}
      \end{subfigure}
      \vspace{-6mm}
      \caption{Cars}
    \end{subfigure}
    \vspace{-3mm}
    \caption{Inheritance of generalization ability. We plotted validation accuracies and generalization gaps between training and validation accuracies. }\label{figure:inheritance of generalization ability}
    \vspace{-5mm}
  \end{figure}

\paragraph{Generalization ability.}
In Figure~\ref{figure:inheritance of generalization ability}, unlike previous sections, we focus on the case that generalization ability of  source initialization $\theta_\source^0$ differs from target initialization $\theta_\transf^0$.
In particular, we use the source initialization $\theta_\source^0$ pre-trained with $10\%$ of training data from ImageNet and the target initialization $\theta_\transf^0$ pre-trained on full ImageNet.
The former (validation accuracy $\approx 50\%$) generalizes poorly than the latter (validation accuracy $\approx 72\%$).
Surprisingly, the results show that (1) learning trajectories can be transferred between pre-trained initializations with different generalization ability, and (2) the transferred parameter successfully inherits the better generalization ability from the target initialization $\theta_\transf^0$ even when the source trajectory itself is trained from the poorly generalizing initialization $\theta_\source^0$.
\vspace{-3mm}

\section{Conclusion}\label{section:conclusion}
\vspace{-2mm}

In this work, we formulated the problem of how we can synthesize an unknown learning trajectory from the known one, named the learning transfer problem, and derived an algorithm that approximately solves it very efficiently.
In our experiments, we confirmed that our algorithm efficiently transfers a given learning trajectory to achieve non-trivial accuracy without training, and that the transferred parameters accelerate their subsequent training.
Moreover, we investigated what properties the transferred parameters inherit from the target initializations from the viewpoints of loss landscape and generalization.
The last observation potentially opens up new paradigm of deep learning in future: For example, when a foundation model is newly updated with better generalization or fixed vulnerability, its fine-tuned models may efficiently follow it by transferring their fine-tuning trajectory from old foundation model to the new one.

\bibliographystyle{iclr2024_conference}
\bibliography{references}

\newpage
\begin{appendices}
\renewcommand{\thefootnote}{\fnsymbol{footnote}}

\begin{appendix}

\section{Validating Assumption (P) at initialization}\label{appendix:validating assumption at initialization}

In this section, we theoretically validate Assumption (P) in Section~\ref{section:algorithm} for $2$-layered ReLU neural network at initialization.
Let $d$ be the dimension of inputs $x=(x_j)_{j=1,\cdots,d} \in \mathbb{R}^d$, and $f_{w,v}(x) := \sum_{i=1}^N v_i \sigma(\sum_{j=1}^d w_{ij} x_j)$ be a ReLU neural network\footnote{We can ignore the non-differentiable locus of the network because we only consider expected gradients over a distribution with a continuous density function, where such low-dimensional locus has zero measure. } with $N$ hidden neurons, where $w=(w_{ij})\in\mathbb{R}^{N\times d}, v=(v_i)\in\mathbb{R}^N$, $\sigma(z) := \max(z, 0)$.

We assume that the input $x\in\mathbb{R}^d$ and its label $y\in\mathbb{R}$ is sampled from some bounded distribution $\mathcal{D}$ whose density function $p_\mathcal{D}(x,y)$ is continuous over $(x,y) \in \mathbb{R}^{d+1}$. The parameters $w$ and $v$ are initialized with Kaiming uniform initialization, i.e.,  $w \sim U(-\frac{1}{\sqrt{d}}, \frac{1}{\sqrt{d}})^{N\times d}$, $v\sim U(-\frac{1}{\sqrt{N}}, \frac{1}{\sqrt{N}})^{N}$.
We employ the MSE loss $\mathcal{L}(y_1, y_2) := \frac{1}{2}|y_1 - y_2|^2$ and focus on the expected gradient $\mathbb{E}_{(x,y)\sim\mathcal{D}} [ \nabla_{(w, v)} \mathcal{L}(f_{w,v}(x), y) ]$ for this loss function at the initialization $w,v$.
Here $\nabla_{(w, v)} \mathcal{L}(f_{w,v}(x), y)$ lives in $\mathbb{R}^{N\times d}\times \mathbb{R}^{d}$ which is obtained as a concatenation of two vectors $(\partial\mathcal{L}(f_{w,v}(x), y) / \partial w_{ij} : i\in\{1,\cdots,N\},j\in\{1\,\cdots,d\}) \in \mathbb{R}^{N\times d}$ and $(\partial\mathcal{L}(f_{w,v}(x), y) / \partial v_{i} : i\in\{1,\cdots,N\} ) \in\mathbb{R}^{N}$.
Moreover, we introduce the following assumption on the data distribution $\mathcal{D}$ for technical reason:
\begin{equation}\label{equation: assumption on distribution}
  \Big|\mathbb{E}_{(x,y)\sim\mathcal{D}} \Big[ y \sigma\Big(\sum_{j} w_{ij} x_j\Big) \Big] \Big| \geq K \text{ for some } K > 0 \text{ with high probability w.r.t. } w_{ij},
\end{equation}
which is reasonable because it requires non-zero correlation between the input $x$ and its label $y$.

To validate our assumption, we introduce normalized distance $\frac{\lVert v_1 - v_2\rVert_2}{\sqrt{\lVert v_1\rVert_2 \lVert v_2\rVert_2}}$ between two vectors $v_1$ and $v_2$, which can be considered as cosine distance when $\lVert v_1\rVert \approx \lVert v_2 \rVert_2$ holds.
Now our claim is that the normalized distance of gradients at independent random initializations can be arbitrarily small by appropriate neuron-permutation and sufficient over-parameterization.
In this sense, the following theorem validates Assumption (P) at random initializations:
\begin{theorem}\label{appendix:theorem:validating assumption at initialization}
    Under the above assumption, given two pairs of randomly initialized parameters $(w, v)$ and $(w', v')$, with high probability, there exists a permutation symmetry $\pi \in S_N$ such that the normalized distance between the expected gradients $\mathbb{E}_{(x,y)} [\nabla_{w,v} \mathcal{L}]$ and $\mathbb{E}_{(x,y)} [\nabla_{w'',v''} \mathcal{L}]$, where $(w'', v'')$ is the permuted parameter of $(w', v')$ with $\pi$, can be arbitrarily small when $N$ is sufficiently large.
\end{theorem}
\begin{proof}
The expected gradient at $(w,v)$ can be computed as follows:

\begin{eqnarray*}
    \mathbb{E}_{(x,y)\sim\mathcal{D}}\left[ \frac{\partial \mathcal{L}(f_{w,v}(x), y)}{\partial w_{ij}} \right]
     &=& \mathbb{E}\left[ (f_{w,v}(x) - y) \frac{\partial f_{w,v}(x)}{\partial w_{ij}} \right] \\
     &=& v_i \mathbb{E}\left[ (f_{w,v}(x) - y) x_j \mathbbm{1}_{ \{\sum_{k}w_{ik}x_k > 0\} } \right], \\
    \mathbb{E}_{(x,y)\sim\mathcal{D}}\left[ \frac{\partial \mathcal{L}(f_{w,v}(x), y)}{\partial v_{i}} \right]
     &=& \mathbb{E}\left[ (f_{w,v}(x) - y) \frac{\partial f_{w,v}(x)}{\partial v_{i}} \right] \\
     &=& \mathbb{E}\left[ (f_{w,v}(x) - y) \sigma\left(\sum_{j=1}^d w_{ij}x_j\right) \right] \\
     &=& \mathbb{E}\left[ (f_{w,v}(x) - y) \left( \sum_{j=1}^d w_{ij}x_j \right)  \mathbbm{1}_{ \{\sum_{k}w_{ik}x_k > 0\} } \right] \\
     &=& \sum_{j=1}^d w_{ij} \mathbb{E}\left[ (f_{w,v}(x) - y) x_j  \mathbbm{1}_{ \{\sum_{k}w_{ik}x_k > 0\} } \right]
\end{eqnarray*}

By Hoeffding's inequality and $v_i \sim U(-\frac{1}{\sqrt{N}}, \frac{1}{\sqrt{N}})$, with high probability, we can assume $f_{w,v}(x) \approx 0$ for sufficiently large $N$. Thus by letting $C^j(w_{i1},\cdots,w_{id}) := \mathbb{E}\left[ - y x_j  \mathbbm{1}_{ \{\sum_{k}w_{ik}x_k > 0\} } \right]$, we can simplify the expected gradients as follows:

\begin{eqnarray*}
    \mathbb{E}_{(x,y)\sim\mathcal{D}}\left[ \frac{\partial \mathcal{L}(f_{w,v}(x), y)}{\partial w_{ij}} \right] &=& v_i C^j(w_{i1},\cdots,w_{id}), \\
    \mathbb{E}_{(x,y)\sim\mathcal{D}}\left[ \frac{\partial \mathcal{L}(f_{w,v}(x), y)}{\partial v_{i}} \right] &=& \sum_{j=1}^d w_{ij} C^j(w_{i1},\cdots,w_{id}),
\end{eqnarray*}

which are also valid for the counterpart $(w', v')$ instead of  $(w, v)$.

Next, we take a permutation $\pi \in S_N$ by applying Lemma~\ref{lemma: existence of permutation} to $(d+1)$-dimensional random vectors $(\sqrt{d}w_{i1},\cdots,\sqrt{d}w_{id},\sqrt{N}v_i)$ and $(\sqrt{d}w'_{i1},\cdots,\sqrt{d}w'_{id},\sqrt{N}v'_i) \sim U([-1,1]^{d+1})$ where $i=1,\cdots,N$.
Let $w'' := (w'_{\pi(i)j})_{ij}$ and $v'':=(v'_{\pi(i)})_{i}$.
Then we want to evaluate the squared normalized distance between $\mathbb{E} [ \nabla_{(w,v)} \mathcal{L} ]$ and $\mathbb{E} [ \nabla_{(w'',v'')} \mathcal{L} ]$:
\begin{eqnarray}\label{equation: normalized distance with permutation}
    \frac{ \left\lVert \mathbb{E} [ \nabla_{(w,v)} \mathcal{L} ] -  \mathbb{E} [ \nabla_{(w'',v'')} \mathcal{L} ] \right\rVert^2_2 }{ \left\lVert \mathbb{E} [ \nabla_{(w,v)} \mathcal{L} ]  \right\rVert_2 \left\lVert \mathbb{E} [ \nabla_{(w'',v'')} \mathcal{L} ]\right\rVert_2  }
\end{eqnarray}

We can evaluate each factor $\left\lVert \mathbb{E} [ \nabla_{(w,v)} \mathcal{L} ] -  \mathbb{E} [ \nabla_{(w'',v'')} \mathcal{L} ] \right\rVert^2_2$, $\left\lVert \mathbb{E} [ \nabla_{(w,v)} \mathcal{L} ] \right\rVert_2$, $\left\lVert \mathbb{E} [ \nabla_{(w',v')} \mathcal{L} ] \right\rVert_2$ in Eq~(\ref{equation: normalized distance with permutation}) as follows:

Claim (1): $ \left\lVert \mathbb{E} [ \nabla_{(w,v)} \mathcal{L} ] -  \mathbb{E} [ \nabla_{(w'',v'')} \mathcal{L} ] \right\rVert^2_2 = O(N\varepsilon^2) + o(N) $ with high probability.

Claim (2): $ \left\lVert \mathbb{E} [ \nabla_{(w,v)} \mathcal{L} ] \right\rVert_2^2  = \Omega(N), \ \left\lVert \mathbb{E} [ \nabla_{(w',v')} \mathcal{L} ] \right\rVert_2^2  = \Omega(N)$ with high probability.

\vspace{2mm}

\underline{Proof of Claim (1)}: 
Let $I := \{i\in\{1,\cdots,N\} : |w_{ij} - w''_{ij}| \leq \sqrt{d}\varepsilon, |v_{i} - v''_{i}| \leq \sqrt{N}\varepsilon \}$.  We have
\begin{eqnarray}
    \left\lVert \mathbb{E} [ \nabla_{(w,v)} \mathcal{L} ] -  \mathbb{E} [ \nabla_{(w'',v'')} \mathcal{L} ] \right\rVert^2_2  \nonumber
    &=& \sum_{i,j} \left| v_i C^j(w_{i*}) - v''_i C^j(w_{i*}'') \right|^2  \nonumber \\ 
    & & + \sum_i \Big| \sum_j w_{ij}C^j(w_{i*}) - w_{ij}''C^j(w''_{i*}) \Big|^2 \nonumber \\
    &\leq& \sum_{i=1}^N\sum_{j=1}^d \Big(\big|v_i - v''_i\big|\cdot\big|C^j(w_{i*})\big| + \big|v_i''\big|\cdot\big|C^j(w_{i*})-C^j(w''_{i*}))\big| \Big)^2 \nonumber \\
    & & + \sum_i \Big| \sum_j w_{ij}C^j(w_{i*}) - w_{ij}''C^j(w''_{i*}) \Big|^2 \nonumber \\
    &\leq& \sum_{i\in I}\sum_{j=1}^d \Big(\big|v_i - v''_i\big|\cdot\big|C^j(w_{i*})\big| + \big|v_i''\big|\cdot\big|C^j(w_{i*})-C^j(w''_{i*}))\big| \Big)^2 \nonumber \\
    & & + \sum_{i\in I} \Big| \sum_j w_{ij}C^j(w_{i*}) - w_{ij}''C^j(w''_{i*}) \Big|^2 \nonumber \\
    & & + O(|I^c|), \label{inequality: evaluation of normalized distance}
\end{eqnarray}
where $I^c$ stands for the complement $\{1,\cdots,N\}\setminus I$. By Lemma~\ref{lemma: existence of permutation}, the size of the complement $|I^c|$ is upper bounded by $O(\sqrt{N})$.
Also we note that $C^j$ is bounded by some constant since $\mathcal{D}$ is bounded, and $|v_j|, |v''_j| \leq 1/\sqrt{N}, |w_{ij}|, |w''_{ij}|\leq 1/\sqrt{d}$. Combining these facts and Lemma~\ref{lemma: evaluation of gradient at v_j} for the second term of (\ref{inequality: evaluation of normalized distance}), it follows that $(\ref{inequality: evaluation of normalized distance}) \leq O(N\varepsilon^2) + O(\sqrt{N})$.

\underline{Proof of Claim (2)}: 
\begin{eqnarray*}
  \left\lVert \mathbb{E}_{(x,y)\sim\mathcal{D}} [ \nabla_{(w,v)} \mathcal{L} ] \right\rVert^2_2 
  &\geq& \sum_{i=1}^N \Big| \mathbb{E} \Big[ \frac{\partial \mathcal{L}}{\partial v_i} \Big] \Big|^2 \\
  &\approx& \sum_{i=1}^N \Big| \mathbb{E}_{(x,y)\sim\mathcal{D}} \Big[ y \sigma\big(\sum_{j=1}^d w_{ij}x_j\big) \Big] \Big|^2 \\
  &\geq& KN.
\end{eqnarray*}
Here we used $f_{w,v}(x) \approx 0$ as explained above and the assumption Eq~(\ref{equation: assumption on distribution}) on distribution $\mathcal{D}$.

Finally, by combining Claim~(1) and (2), we obtain that the normalized distance (Eq~\ref{equation: normalized distance with permutation}) converges to $C_1\varepsilon^2$ when $N\to\infty$, with some constant $C_1>0$, which can be arbitrarily small by controlling $\varepsilon$.
\end{proof}

\begin{lemma}\label{lemma: existence of permutation}
Let $\varepsilon > 0$ be any small real number.
Let $z_1, \cdots, z_N, z'_1, \cdots, z'_N, \sim U([-K, K]^{k})$ be i.i.d. uniform random variables where $K > 0$ is a fixed constant.
When $N$ is sufficiently large, with high probability, there exists a permutation $\pi\in S^{N}$ such that the number of indices $i \in \{1,\cdots,N\}$ satisfying $|z_{ij}-z'_{\pi(i)j}| > \varepsilon$ for some $j$ is upper bounded by $O(\sqrt{N})$.
\end{lemma}
\begin{proof}
We can assume $K=1$ without loss of generality by re-scaling $z_i$, $z'_i$ and $\varepsilon$.
Here we follow the argument given in \citet{entezari2021role}.
For simplicity, we take $M\in\mathbb{N}$ satisfying $\frac{1}{M} \leq \varepsilon \leq \frac{1}{M-1}$.
For each $\mathbf{l} = (l_1,\cdots,l_k) \in L := \{1, \cdots, 2M\}^{k}$, we consider 
\begin{align*}
& Q_{\mathbf{l}} := (-1+\frac{l_1-1}{M},-1+\frac{l_1}{M})\times\cdots\times(-1+\frac{l_k-1}{M},-1+\frac{l_k}{M}), \\
& n_{\mathbf{l}} := \# \{ i\in I : z_i \in Q_{\mathbf{l}} \}, \ \ \ n'_{\mathbf{l}} := \# \{ i\in I : z'_i \in Q_{\mathbf{l}} \}.
\end{align*}
Note that any $z_i$ and $z'_i$ are contained in $Q_{\mathbf{l}}$ for some $\mathbf{l}\in L$ with probability $1$.
For each $\mathbf{l}\in L$, 
the number $n_{\mathbf{l}}$ and $n'_{\mathbf{l}}$ can be considered as the sum of random binary variables $b_1 + \cdots + b_N$ where each $b_j$ is sampled from Bernoulli distribution $\mathrm{Ber}(\frac{1}{M})$.
By Hoeffding's inequality for the Bernoulli random variables, we have
\begin{equation*}
  \mathbf{P}\left( \Big| n_{\mathbf{l}} - \frac{N}{M} \Big| \geq t  \right) \leq 2 \exp \left( -\frac{t^2}{2N} \right)
\end{equation*}
for each $\mathbf{l}\in L$. Thus, with probability $1-\delta$, we obtain
\begin{equation*}
\Big| n_{\mathbf{l}} - \frac{N}{M} \Big|, \Big| n'_{\mathbf{l}} - \frac{N}{M} \Big| \geq \sqrt{2N\log \left( \frac{4 |L|}{\delta} \right)}.
\end{equation*}

To construct the desired correspondence $\pi$ between $(z_i : 1\leq i \leq N)$ and $(z'_j: 1\leq j \leq N)$, each $z_i \in Q_\mathbf{l}$ should be mapped to some $z'_j \in  Q_\mathbf{l}$.
The number of $\{ i\in \{1, \cdots, N\} : z_i \text{ does not have its counterpart } z'_j \}$ can be upper bounded by
\begin{equation*}
  \sum_{\mathbf{l}\in L} \big|n_{\mathbf{l}} - n'_{\mathbf{l}} \big|
    \leq \sum_{\mathbf{l}\in L} \left|n_{\mathbf{l}} - \frac{N}{M}\right| + \left|n'_{\mathbf{l}} - \frac{N}{M}\right|
    \leq 2|L| \sqrt{2N\log \left( \frac{4 |L|}{\delta} \right)} = O(\sqrt{N}).
\end{equation*}
\end{proof}

\begin{lemma}\label{lemma: evaluation of gradient at v_j}
  Assume that $(w_1,\cdots,w_d), (w'_1,\cdots,w'_d) \in \big[-\frac{1}{\sqrt{d}}, \frac{1}{\sqrt{d}}\big]^d$ satisfy $\sup_k |w_k-w'_k|\leq \varepsilon$. It follows that
  $\big| \sum_j w_{j}C^j(w_1,\cdots, w_d) - \sum_j w'_j C^j(w'_1, \cdots, w'_d) \big| \leq O(\varepsilon)$.
\end{lemma}
\begin{proof}
Let $G(w_1,\cdots,w_d) := \sum_j w_{j}C^j(w_1,\cdots, w_d)=\mathbb{E}[-y\sigma(\sum_j w_j x_j)]$. By applying triangle inequality iteratively, we have
\begin{eqnarray*}
  & & |G(w_1,\cdots,w_d) - G(w'_1,\cdots,w'_d)| \\
  &\leq& |G(w_1,\cdots,w_d) - G(w'_1,w_2,\cdots,w_d)| + |G(w'_1,w_2,\cdots,w_d) - G(w'_1,\cdots,w'_d)| \\
  &\leq& \cdots \\
  &\leq& |G(w_1,\cdots,w_d) - G(w'_1,w_2,\cdots,w_d)| + \cdots + |G(w'_1,\cdots,w'_d,w_d) - G(w'_1,\cdots,w'_d)|
\end{eqnarray*}

Thus the proof can be reduced to the case where $|w_{k_0} - w'_{k_0}| \leq \varepsilon$ for some $k_0$ and $w_j=w'_j$ for $k\not=k_0$. We can assume $k_0=1$ and $w_1 \leq w'_1 \leq w_1 +  \varepsilon$ without loss of generality. Then we have
\begin{eqnarray}
  & & \big| G(w_1,\cdots,w_d) - G(w'_1,\cdots,w'_d) \big| \nonumber \\
 &=& \Big| \mathbb{E}\Big[ -y\sigma\Big(\sum_j w_{j} x_j\Big) \Big] - \mathbb{E}\Big[ -y\sigma\Big(\sum_j w'_{j} x_j\Big) \Big] \Big| \nonumber \\
 &=& \Big| \mathbb{E}\Big[ y \Big\{ \sigma\Big(\sum_j w_{j} x_j\Big) - \sigma\Big(\sum_j w'_{j} x_j\Big) \Big\} \Big] \Big| \nonumber \\
 &\leq& \int |y|\cdot \Big|\Big\{ \sigma\Big(w_1 x_1 + \sum_{j=2}^d w_{j} x_j\Big) - \sigma\Big(w'_1 x_1 + \sum_{j=2}^d w_{j} x_j\Big) \Big\}\Big| \cdot p(x,y)dxdy \nonumber \\
 &\leq& \int |y|\cdot \Big| w_1 x_1 + \sum_{j=2}^d w_{j} x_j \Big| \cdot \mathbbm{1}_{ \{ w_1x_1 + \sum_{j=2}^d w_jx_j \geq 0, w'_1x_1 + \sum_{j=2}^d w_jx_j < 0 \} }  \cdot p(x,y)dxdy \nonumber \\
 & & +  \int |y|\cdot \Big| w'_1 x_1 + \sum_{j=2}^d w_{j} x_j \Big| \cdot \mathbbm{1}_{ \{ w_1x_1 + \sum_{j=2}^d w_jx_j < 0, w'_1x_1 + \sum_{j=2}^d w_jx_j \geq 0 \} } \cdot p(x,y)dxdy \nonumber \\
 & & +  \int |y|\cdot \Big|  \Big(w_1 x_1 + \sum_{j=2}^d w_{j} x_j\Big) - \Big(w'_1 x_1 + \sum_{j=2}^d w_{j} x_j\Big)  \Big| \cdot p(x,y)dxdy \label{inequality: evaluation of gradient at v_j}
\end{eqnarray}
On the first term of the last inequality~(\ref{inequality: evaluation of gradient at v_j}), since $w'_1x_1 + \sum_{j=2}^d w_jx_j < 0$ holds on the integrated region, it follows that $0\leq w_1x_1 + \sum_{j=2}^d w_jx_j < w_1x_1 - w'_1x_1$. Thus we have 
\begin{eqnarray*}
  & & \int |y|\cdot \Big| w_1 x_1 + \sum_{j=2}^d w_{j} x_j \Big| \cdot \mathbbm{1}_{ \{ w_1x_1 + \sum_{j=2}^d w_jx_j \geq 0, w'_1x_1 + \sum_{j=2}^d w_jx_j < 0 \} }  \cdot p(x,y)dxdy \\
  &\leq&  \int |y|\cdot |w_1x_1 - w'_1x_1| \cdot p(x,y)dxdy \leq \int |y|\cdot \varepsilon |x_1| \cdot p(x,y)dxdy \\
\end{eqnarray*}
The same argument holds for the second term of inequality~(\ref{inequality: evaluation of gradient at v_j}). Furthermore, the third term is also bounded by $\int |y|\cdot \varepsilon |x_1| \cdot p(x,y)dxdy$.
Therefore, by the boundedness of the distribution $\mathcal{D}$, we have $(\ref{inequality: evaluation of gradient at v_j}) \leq 3\varepsilon \int |yx_1| p(x,y)dxdy = O(\varepsilon)$.
\end{proof}

\section{Proof of Lemma~\ref{lemma:consistency of subproblems}}

In this section, we employ the same notation as Section~\ref{section:algorithm}.
We assume that
\begin{enumerate}
\item $\lVert (\theta_i^{t+1} - \theta_i^{t}) - (-\alpha_t\nabla_{\theta_i^t} \mathcal{L}) \rVert_2  < \varepsilon$,
\item $\lVert \pi_s \nabla_{\theta_1^t}\mathcal{L} - \nabla_{\theta^t_{2,\pi_{s-1}}} \mathcal{L} \rVert_2 < \varepsilon$, for $t\leq s-1$,
\item The gradient $\nabla_\theta\mathcal{L}$ is $K$-Lipschitz continuous with respect to the parameter $\theta$, i.e., $\lVert \nabla_\theta\mathcal{L}-\nabla_{\theta'}\mathcal{L} \rVert_2 \leq K \lVert \theta - \theta'\rVert_2$.
\end{enumerate}

\begin{lemma}
Under the above assumptions, we have
\begin{equation}
    \theta^{t}_{2,\pi_{s'}} - \theta^t_{2,\pi_{s}}  = O(T^{s'} K^{s'} \varepsilon),
\end{equation}
for $0\leq t \leq s < s' \leq T$.
\end{lemma}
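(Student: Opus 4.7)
The plan is to convert the quantity $\theta^t_{2,\pi_{s'}} - \theta^t_{2,\pi_{s}}$ into a recursive expression by three successive rewrites---unrolling the construction of $\theta^t_{2,\pi}$, expanding $\theta_1^t - \theta_1^0$ via assumption~(1), and swapping each source gradient for the corresponding target gradient via assumption~(2)---and then to close the recursion using the Lipschitz assumption~(3). The base case is $t=0$, where $\theta^0_{2,\pi} = \theta_2^0$ does not depend on $\pi$ and the bound holds trivially.

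First I would establish by induction on $t$ the identity $\theta^t_{2,\pi} = \theta_2^0 + \pi(\theta_1^t - \theta_1^0)$, which follows directly from the defining relation $\theta^t_{2,\pi} = \theta^{t-1}_{2,\pi} + \pi(\theta_1^t - \theta_1^{t-1})$ together with the linearity of the group action on $\Theta$. This immediately yields $\theta^t_{2,\pi_{s'}} - \theta^t_{2,\pi_{s}} = (\pi_{s'} - \pi_{s})(\theta_1^t - \theta_1^0)$. Telescoping with assumption~(1) then gives $\theta_1^t - \theta_1^0 = -\sum_{r=0}^{t-1} \alpha_r \nabla_{\theta_1^r}\mathcal{L} + \eta$ with $\lVert \eta \rVert_2 \leq t\varepsilon$, so, using that permutations are isometries, the problem reduces to bounding $\lVert (\pi_{s'} - \pi_s) \nabla_{\theta_1^r}\mathcal{L} \rVert_2$ for each $r \leq t-1$.

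Next I would invoke assumption~(2) at both $s$ and $s'$ (valid because $r \leq t-1 \leq s-1 < s'-1$) to rewrite
\begin{equation*}
(\pi_{s'} - \pi_s)\nabla_{\theta_1^r}\mathcal{L} = \nabla_{\theta^r_{2,\pi_{s'-1}}}\mathcal{L} - \nabla_{\theta^r_{2,\pi_{s-1}}}\mathcal{L} + O(\varepsilon),
\end{equation*}
and then bound the remaining gradient difference by $K\lVert \theta^r_{2,\pi_{s'-1}} - \theta^r_{2,\pi_{s-1}}\rVert_2$ via assumption~(3). This last quantity is a strictly smaller instance of the same target---at time $r < t$ and with permutation indices $s-1 < s'-1$. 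Writing $D^t_{s,s'} := \lVert \theta^t_{2,\pi_{s'}} - \theta^t_{2,\pi_s} \rVert_2$ and $B^t := \max_{t \leq s < s' \leq T} D^t_{s,s'}$, and folding step sizes into constants, this produces a recursion of the form $B^t \leq \alpha_{\max} K \sum_{r=0}^{t-1} B^r + C t\varepsilon$ with base case $B^0 = 0$.

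Finally, unrolling this recursion by induction on $t$ gives $B^t = O\bigl((\alpha_{\max} K T)^t \varepsilon\bigr)$, which, since $t \leq s < s' \leq T$, is bounded by the claimed $O(T^{s'} K^{s'} \varepsilon)$. The main obstacle I anticipate is preventing the two independent $O(\varepsilon)$ errors---one per source step from assumption~(1), one per permutation from assumption~(2)---from entering the geometric factor of the recursion; the resolution is to keep them on the additive $Ct\varepsilon$ side rather than letting them feed back through the Lipschitz step, so that only the Lipschitz constant $K$ and the trajectory length $T$ control the exponential growth.
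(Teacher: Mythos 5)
Your proposal is correct and follows essentially the same route as the paper's proof: unroll $\theta^t_{2,\pi}=\theta_2^0+\pi(\theta_1^t-\theta_1^0)$, telescope via assumption~(1), swap source for target gradients via assumption~(2), and close with the Lipschitz bound and an induction. The only difference is bookkeeping---you track an explicit difference quantity $B^t$ and recurse on $t$, whereas the paper writes a single chain of approximate equalities and inducts on the permutation index---and both yield the stated $O(T^{s'}K^{s'}\varepsilon)$ bound.
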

\begin{proof}
We prove by induction:
\begin{align*}
\theta^{t}_{2,\pi_{s'}} &= \theta_2^0 + \pi_{s'} (\theta_1^t - \theta_1^0) \\
                        &= \theta^0_2 + \pi_{s'} \left( -\sum_{t'=0}^{t-1} \alpha_{t'}\nabla_{\theta_1^{t'}} \mathcal{L} + O(t \varepsilon) \right)  & \text{(by Assumption 1.)}  \\
                        &= \theta^0_2 + \sum_{t'=0}^{t-1}  \left(- \alpha_{t'} \pi_{s'} \nabla_{\theta_1^{t'}}\mathcal{L} \right) + O(t\varepsilon) \\
                        &= \theta^0_2 + \sum_{t'=0}^{t-1} \left(- \alpha_{t'} \nabla_{\theta^{t'}_{2, \pi_{s'-1}}}\mathcal{L} + O(\varepsilon) \right) + O(t\varepsilon) & \text{(by Assumption 2.)} \\
                        &= \theta_2^0 + \sum_{t'=0}^{t-1} \left(- \alpha_{t'}  \nabla_{\theta^{t'}_{2, \pi_{s-1}}+O(T^{s'-1}K^{s'-1}\varepsilon)}\mathcal{L} + O(\varepsilon) \right) + O(t\varepsilon) & \text{(by induction hypothesis.)}\\
                        &= \theta_2^0 + \sum_{t'=0}^{t-1} \left(- \alpha_{t'} \nabla_{\theta^{t'}_{2, \pi_{s-1}}}\mathcal{L} \right) +O(T^{s'}K^{s'}\varepsilon) & \text{(by Assumption 3.)}\\
                        &= \theta_2^0 + \pi_{s}\left( - \sum_{t'=0}^{t-1} \alpha_{t'} \nabla_{\theta^{t'}_1}\mathcal{L} \right) +O(T^{s'}K^{s'}\varepsilon) \\
                        &= \theta_2^0 + \pi_{s}\left( \theta_1^{t} - \theta_1^0 \right) +O(T^{s'}K^{s'}\varepsilon) \\
                        &= \theta^t_{2,\pi_s} +O(T^{s'}K^{s'}\varepsilon).
\end{align*}

\end{proof}

\section{Related Work}\label{appendix:related works}

\paragraph{Loss landscape, linear mode connectivity, permutation symmetry.}

Loss landscape of training deep neural network has been actively studied in an effort to unravel mysteries of non-convex optimization in deep learning~\citep{hochreiter1997flat,choromanska2015loss,lee2016gradient,keskar2017onlargebatch,li2018visualizing}.
One of the mysteries in deep learning is the stability and consistency of their training processes and solutions, despite of the multiple sources of randomness such as random initialization, data ordering and data augmentation~\citep{fort2019deep,bhojanapalli2021reproducibility,summers2021nondeterminism,jordan2023calibrated}.
Previous studies of mode connectivity both theoretically~\citep{freeman2017topology,simsek2021geometry} and empirically~\citep{draxler2018essentially,garipov2018loss} demonstrate the existence of low-loss curves between any two optimal solutions trained independently with different randomness.

Linear mode connectivity (LMC) is a special case of mode connectivity where two optimal solutions are connected by a low-loss linear path~\citep{nagarajan2019uniform,frankle2020linear,mirzadeh2021linear,entezari2021role,benzing2022random,ainsworth2023git,juneja2023linear,lubana2023mechanistic}.
In this line of research, \citet{entezari2021role} observed that even two solutions trained from different random initialization can be linearly connected by an appropriate permutation symmetry.
\citet{ainsworth2023git} developed an efficient method to find such permutations, and \citet{jordan2023repair} extends it to NN architectures with Batch normalization~\citep{ioffe2015batch}.
Their observations strength the expectation on some sort of similarity between two training processes even from different random initializations, via permutation symmetry.
In our work, based on these observations, we attempt to transfer one training process to another initial parameter by permutation symmetry.

Another line of research related to our work is the studies of monotonic linear interpolation (MLI) between an initialization and its trained result.
\citet{goodfellow2015qualitatively} first observed that the losses are monotonically decreasing along the linear path between an initial parameter and the trained one.
\citet{frankle2020revisiting} and \citet{lucas2021onmonotonic} confirmed that the losses are monotonically non-increasing even with modern network architectures such as CNNs and ResNets~\citep{he2016deep}.
\citet{vlaar2022can} empirically analyzed which factor in NN training influences the shape of the non-increasing loss curve along the linear interpolation, and \citet{wang2023plateau} theoretically analyzed the plateau phenomenon in the early phase of the linear interpolation.
Motivated by these observations, we introduced the notion of linear trajectories in Section~\ref{section:additional techniques} to reduce storage costs in our learning transfer.

\vspace{-1mm}
\paragraph{Model editing.}

Our approach of transferring learning trajectories can be also considered as a kind of model editing~\citep{sinitsin2020editable,santurkar2021editing,ilharco2022patching,ilharco2023editing} in the parameter space because we modify a given initial parameter by adding an appropriately permuted trajectory.
In particular, a recent work by \citet{ilharco2023editing} is closely related to our work.
They proposed to arithmetically edit a pre-trained NN with a task vector, which is defined by subtracting the initial pre-trained parameter from the parameter fine-tuned on a specific task.
From our viewpoint, task vectors can be seen as one-step learning trajectories (i.e., learning trajectories with $T=1$).
Model merging (or model fusion)~\citep{singh2020model,matena2022merging,wortsman2022model,li2023branchtrainmerge} is also related in the sense of the calculation in the parameter space.

\vspace{-1mm}
\paragraph{Efficient training for multiple NNs.}
There are several literatures that attempt to reduce the computation costs in training multiple NNs.
Fast ensemble is an approach to reduce the cost in ensemble training by cyclically scheduled learning rate~\citep{huang2017snapshot} or by searching different optimal basins in loss landscape~\citep{garipov2018loss,fort2019deep,wortsman2021learning,benton2021loss}.
A recent work by \citet{liu2022knowledge} leverages knowledge distillation~\citep{hinton2015distilling} from one training to accelerate the subsequent trainings.
Our approach differs from theirs in that we try to establish a general principle to transfer learning trajectories.
Also, the warm-starting technique investigated by \citet{ash2020warm} seems to be related in that they subsequently train from a once trained network.
There may be some connection between their and our approaches, which remains for future work.

\vspace{-1mm}
\paragraph{Gradient matching.}

The gradient information obtained during training has been utilized in other areas outside of ours.
For example, in dataset distillation, \citet{zhao2021dataset} optimized a distilled dataset by minimizing layer-wise cosine similarities between gradients on the distilled dataset and the real one, starting from random initial parameters, which leads to similar training results on those datasets.
Similarly, \citet{yin2021see} successfully recovered private training data from its gradient by minimizing the distance between gradients.
In contrast to their problem where input data is optimized, our problem requires optimizing unknown transformation for NN parameters.
In addition, our problem requires matching the entire learning trajectories, which are too computationally expensive to be computed naively.

\section{Details for our experiments}\label{appendix:details on experiments}

\subsection{Datasets}

The datasets used in our experiments (Section~\ref{section:experiments}) are listed below. For all datasets, we split the officially given training dataset into 9:1 for training and validation.

\begin{itemize}
    \item {\bf MNIST.} \ \ MNIST~\citep{lecun2010mnist} is a dataset of $28\times 28$ images of hand-written digits, which is available under the  terms of the CC BY-SA 3.0 license.
    \item {\bf CIFAR-10, CIFAR100.} \ \  CIFAR-10 and CIFAR-100~\citep{krizhevsky2009cifar10} are datasets of $32\times 32$ images with $10$ and $100$ classes respectively.
    \item {\bf ImageNet.} \ \ ImageNet~\citep{deng2009imagenet} is a large-scale dataset of images with $1000$ classes, which is provided for non-commercial research or educational use.
    \item {\bf Stanford Cars.} \ \ Stanford Cars~\citep{krause20133d} is a dataset of images with 196 classes of cars, which is provided for
    research purposes. We refer to this dataset as Cars for short.
    \item {\bf CUB-200-2011.}  \ \ CUB-200-2011~\citep{wah2011caltech} is a dataset of images of 200 species of birds. We refer to this dataset as CUB for shot.
\end{itemize}

\subsection{Network architectures}

The neural network architectures used in our experiments (Section~\ref{section:experiments}) are listed as follows:

\begin{itemize}
    \item {\bf 2-MLP.} \ \ 2-MLP is a two-layered neural network with the ReLU activations. The design of this architecture is shown in Table~\ref{app:table:architecture of 2-MLP}.
    \item {\bf Conv8.} \ \ Conv8 is an $8$-layered CNN followed by three linear and ReLU layers. The design of this architecture is shown in Table~\ref{app:table:architecture of Conv8}.
    \item {\bf ResNet-18.} \ \ The ResNet family~\citep{he2016deep} is a series of deep CNNs with skip connections. We employed the standard $18$-layered one for ResNet-18.
\end{itemize}

\begin{table}[h]
    \caption{The architecture of 2-MLP.}
    \label{app:table:architecture of 2-MLP}
    \centering
    \begin{tabular}{cll}
        \toprule
        No. & Layers
            &  Output dimensions \\
        \midrule
        1 & Flattening & $784 \ (=28\times 28)$
        \\
        2 & Linear $\to$ ReLU
        & $4096$
        \\
        4 & Linear
        & $10$
        \\
        5 & Softmax
        & $10$
        \\
        \bottomrule
    \end{tabular}
\end{table}

\begin{table}[h]
\caption{The architecture of Conv8.}
\label{app:table:architecture of Conv8}
\centering
\begin{tabular}{cl}
    \toprule
    No. & Layers \\
    \midrule
    1 & Conv(input=$3$, output=$64$, kernel\_size=$(3,3)$, stride=$1$, padding=$1$) $\to$ ReLU \\
    2 & Conv(input=$64$, output=$64$, kernel\_size=$(3,3)$, stride=$1$, padding=$1$) $\to$ ReLU \\
    3 & MaxPooling(kernel\_size=(2, 2)) \\
    4 & Conv(input=$64$, output=$128$, kernel\_size=$(3,3)$, stride=$1$, padding=$1$) $\to$ ReLU \\
    5 & Conv(input=$128$, output=$128$, kernel\_size=$(3,3)$, stride=$1$, padding=$1$) $\to$ ReLU \\
    6 & MaxPooling(kernel\_size=(2, 2)) \\
    7 & Conv(input=$128$, output=$256$, kernel\_size=$(3,3)$, stride=$1$, padding=$1$) $\to$ ReLU \\
    8 & Conv(input=$256$, output=$256$, kernel\_size=$(3,3)$, stride=$1$, padding=$1$) $\to$ ReLU \\
    9 & MaxPooling(kernel\_size=(2, 2)) \\
    10 & Conv(input=$256$, output=$512$, kernel\_size=$(3,3)$, stride=$1$, padding=$1$) $\to$ ReLU \\
    11 & Conv(input=$512$, output=$512$, kernel\_size=$(3,3)$, stride=$1$, padding=$1$) $\to$ ReLU \\
    12 & MaxPooling(kernel\_size=(4, 4)) \\
    13 & Conv(input=$512$, output=$512$, kernel\_size=$(3,3)$, stride=$1$, padding=$1$) $\to$ ReLU \\
    14 & Linear(input=$512$, output=$256$) $\to$ ReLU \\
    15 & Linear(input=$256$, output=$256$) $\to$ ReLU \\
    16 & Linear(input=$512$, output=$10$) \\
    17 & Softmax
    \\
    \bottomrule
\end{tabular}
\end{table}

\subsection{Training details}

\subsubsection{Details on implementation and devices}

We implemented the codebase for all experiments in Python 3 with the PyTorch library~\citep{paszke2019pytorch}.
Our computing environment is a machine with $12$ Intel CPUs, $140$ GB CPU memory and a single A100 GPU.

\subsubsection{Training of source trajectories}
In training for source trajectories, we used SGD with momentum in PyTorch for the optimization.
It has the following hyperparameters: the total epoch number $E$, batch size $B$, learning rate $\alpha$, weight decay $\lambda$, momentum coefficient $\mu$.
We used the cosine annealing~\citep{loshchilov2017sgdr} for scheduling the learning rate $\eta$ except for MNIST.
For the random initialization, we used the standard Kaiming initialization~\citep{he2015delving}, which is also a default in PyTorch.
The details on the hyperparameters are as follows:
\begin{itemize}
    \item {\bf 2-MLP on MNIST.} \ \  We used $E=15$, $B=128$, $\alpha=0.01$, $\lambda=0.0$, $\mu=0.9$.
    \item {\bf Conv8 on CIFAR-10.} \ \ We used $E=60$, $B=128$, $\alpha=0.05$, $\lambda=0.0001$, $\mu=0.9$.
    \item {\bf Conv8 on CIFAR-100.} \ \ We used $E=30$, $B=128$, $\alpha=0.05$, $\lambda=0.0001$, $\mu=0.9$, starting from the pre-trained parameter on CIFAR-10.
    \item {\bf ResNet18 on ImageNet.} \ \  We used $E=100$, $B=128$, $\alpha=0.1$, $\lambda=0.0001$, $\mu=0.9$. For the first $5$ epochs, we gradually increased the learning rate as $\eta = 0.1 \times (i/5)$ for each $i$-th epoch ($i=1,\cdots,5$). For the last $95$ epochs, we decayed the learning rate by cosine annealing starting from $\eta = 0.1$.
    \item {\bf ResNet18 on Cars.} \ \  We used $E=30$, $B=128$, $\alpha=0.1$, $\lambda=0.0001$, $\mu=0.9$, starting from the pre-trained parameter on ImageNet.
    \item {\bf ResNet18 on CUB.} \ \ We used $E=30$, $B=128$, $\alpha=0.1$, $\lambda=0.0001$, $\mu=0.9$, starting from the pre-trained parameter on ImageNet.
\end{itemize}

\subsubsection{Hyperparameters for transferring learning trajectories}

Our methods (Algorithm~\ref{algorithm:gradient matching along trajectory},~\ref{algorithm:fast gradient matching along trajectory}) have the following hyperparameters: the length $T$ of learning trajectories, the batch size $B$ for each gradient matching. Also, for NN architectures with the Batch normalization~\citep{ioffe2015batch} such as ResNets, the batch size $B'$ for resetting the means and variances in the Batch Normalization layers~\citep{jordan2023repair} is also a hyperparameter.

\begin{itemize}
    \item {\bf 2-MLP on MNIST.} \ \  We used $B=128$ and $T=5$.
    \item {\bf Conv8 on CIFAR-10.} \ \ We used $B=128\times 2$ and $T=30$.
    \item {\bf Conv8 on CIFAR-100.} \ \ We used $B=128\times 2$ and $T=15$.
    \item {\bf ResNet18 on ImageNet.} \ \  We used $B=128\times 100$, $B'=128 \times 20$ and $T=40$.
    \item {\bf ResNet18 on Cars.} \ \  We used $B=128\times 5$, $B'=128\times 2$ and $T=15$.
    \item {\bf ResNet18 on CUB.} \ \ We used $B=128\times 5$, $B'=128\times 2$ and $T=15$.
\end{itemize}

\subsubsection{Computational cost of GMT/FGMT}\label{appendix: computational cost}

Total computational cost of FGMT (resp. GMT) consists of:  computing $2BT$ (resp. $2BT^2$) gradients, which is the same budget as $2T$ iterations of standard training, and solving lightweight (depending on model architecture, typically $1$ or $2$ seconds in our environment) optimizations (eq.~\ref{eq:linear sub-problems}) for $T$ times.
Thus, for FGMT, the required computational cost should be significantly smaller than standard training (if implemented optimally).

\subsubsection{Subsequent training of transferred parameters}
For the subsequent training in Section~\ref{section:experiments:acceleration}, we used the same optimizer and learning rate scheduler as training of source trajectories, with slightly small initial learning rates ($\alpha = 0.01$ on CIFAR-10, $\alpha = 0.05$ on CIFAR-100, $\alpha = 0.05$ on Cars, $\alpha = 0.01$ for CUB), which are selected based on the validation accuracy of Naive baseline for fair comparison.

\newpage

\section{Additional experimental results}

\subsection{Learning transfer for actual trajectories}\label{appendix:section:learning transfer for actual trajectories}

\begin{figure}[H]
  \centering
  \begin{subfigure}[t]{0.325\textwidth}
      \centering
      \includegraphics[width=\textwidth]{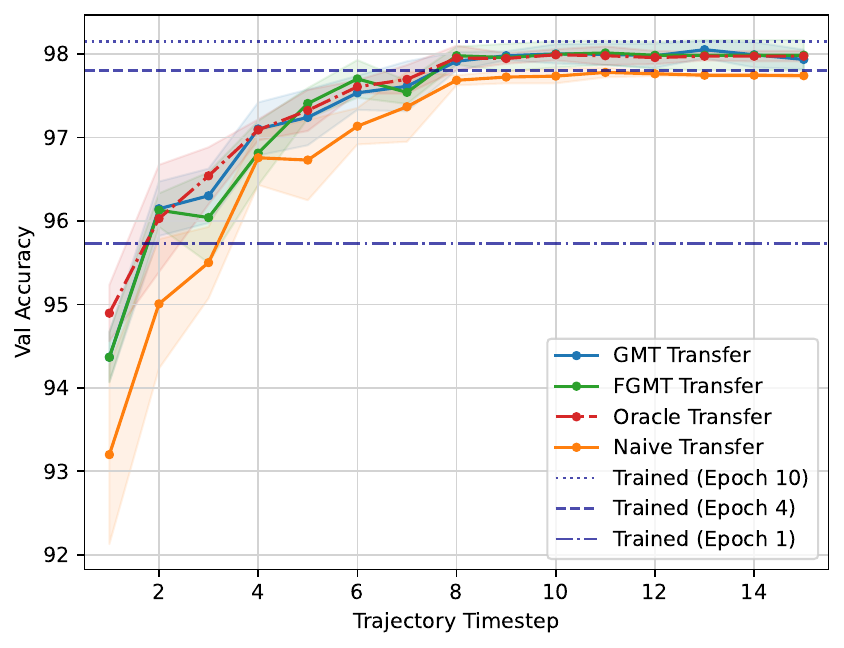}
      \caption{MNIST (2-MLP)}
  \end{subfigure}
  \hfill
  \begin{subfigure}[t]{0.325\textwidth}
      \centering
      \includegraphics[width=\textwidth]{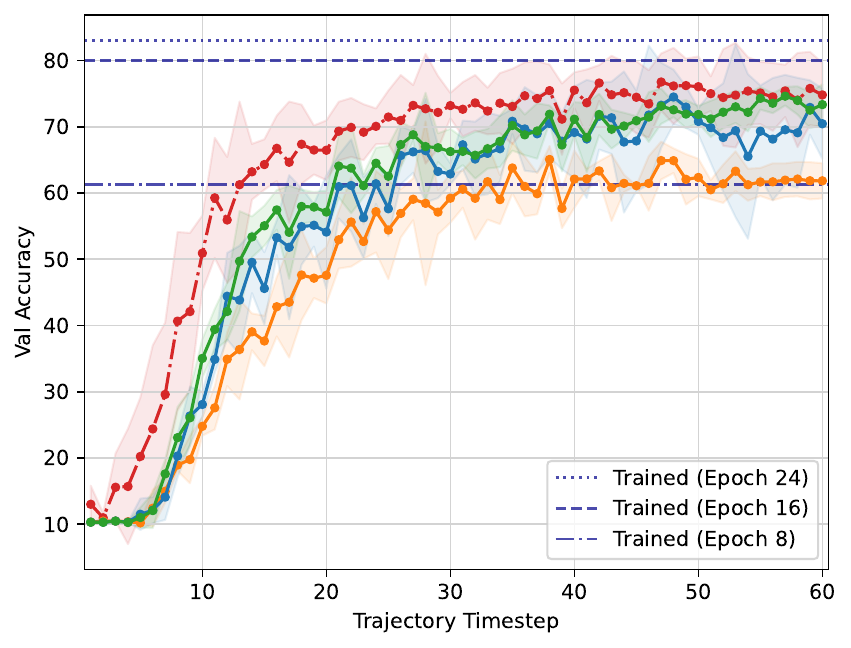}
      \caption{CIFAR-10 (Conv8)}
  \end{subfigure}
  \hfill
  \begin{subfigure}[t]{0.325\textwidth}
      \centering
      \includegraphics[width=\textwidth]{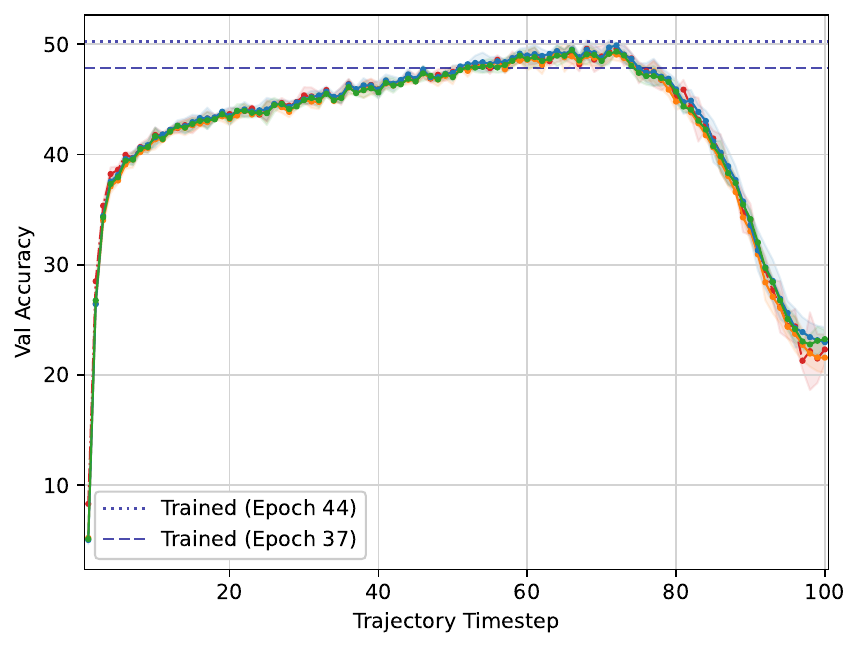}
      \caption{ImageNet (ResNet-18)}
  \end{subfigure}
  \vskip\baselineskip
  \begin{subfigure}[t]{0.325\textwidth}
      \centering
      \includegraphics[width=\textwidth]{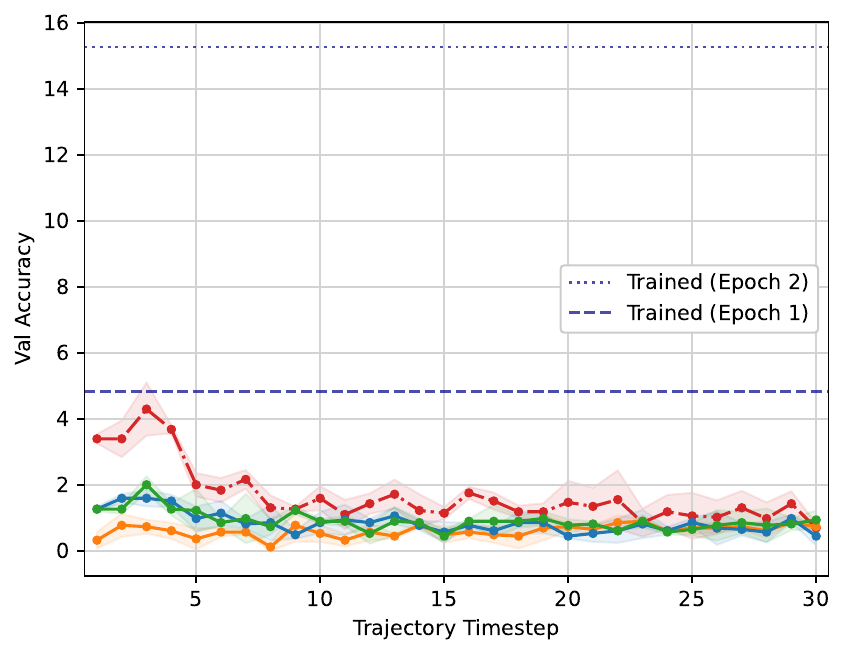}
      \caption{ImageNet $\to$ Cars}
  \end{subfigure}
  \begin{subfigure}[t]{0.325\textwidth}
      \centering
      \includegraphics[width=\textwidth]{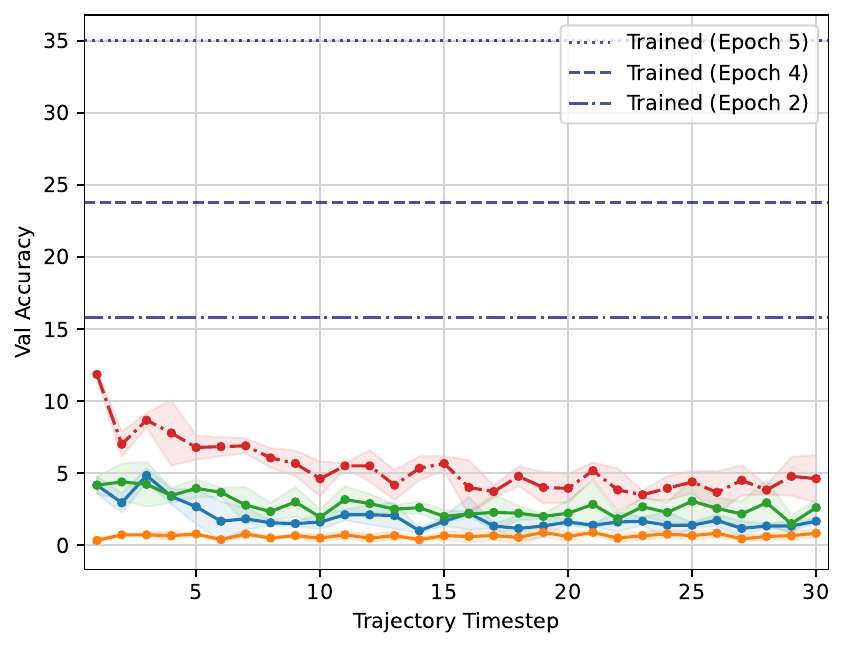}
      \caption{ImageNet $\to$ CUB}
  \end{subfigure}
  \begin{subfigure}[t]{0.325\textwidth}
    \centering
    \includegraphics[width=\textwidth]{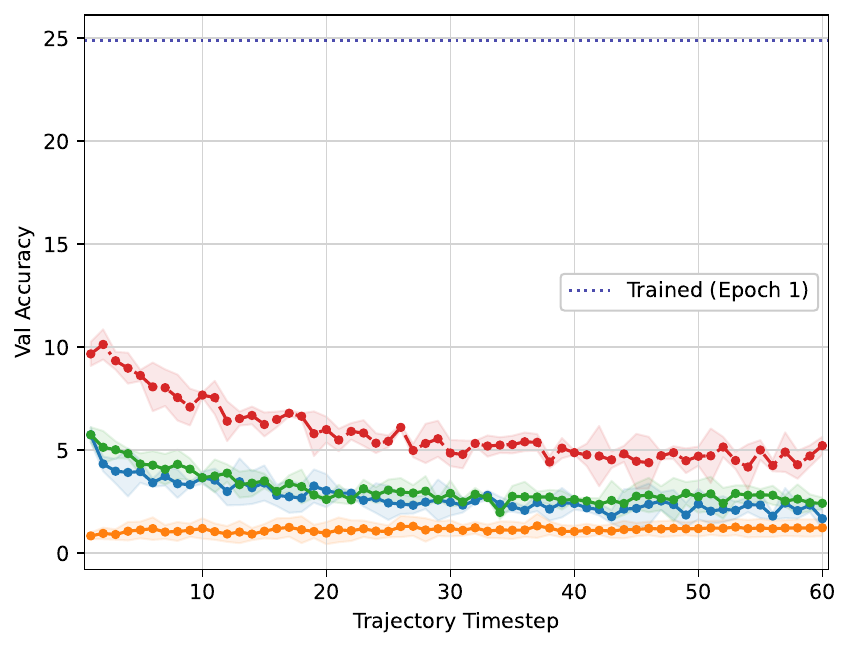}
    \caption{CIFAR-10 $\to$ CIFAR-100}
  \end{subfigure}
\caption{We plot the validation accuracies of the transferred parameter $\theta_{\transf,\pi_t}^t$ for each timestep $t=1,\cdots,T$ with various datasets and NN architectures as in Figure~\ref{figure:transfer between random and pretrained inits}, except that the actual trajectory $\theta_\source^t$ at $t$-th epoch of SGD is transferred instead of linear trajectories. Compared to the results for linear trajectories in Figure~\ref{figure:transfer between random and pretrained inits}, the transferred results for actual trajectories tend to have more variance in validation accuracy and fail to transfer in fine-tuning scenario. }
  \vspace{-1mm}
\end{figure}

\subsection{Ensemble evaluation}

\begin{table}[H]
  \centering
  \resizebox{0.7\columnwidth}{!}{
  \begin{tabular}{lccc}
    \toprule
      & GMT
      & FGMT
      & Full Ensemble
      \\
    \midrule
    CIFAR-10
    & $92.01$ (@50ep)
    & $92.11$ (@50ep)
    & $\mathbf{92.26}$ (@60ep)
    \\
    CIFAR-100
    & $\mathbf{70.41}$ (@50ep)
    & $69.53 $ (@50ep)
    & $69.81 $ (@60ep)
    \\
    Cars 
    & $86.83 $ (@20ep)
    & $86.61 $ (@20ep)
    & $\mathbf{87.10} $ (@30ep)
    \\
    CUB 
    & $74.71 $ (@10ep)
    & $\mathbf{75.62}$ (@10ep)
    & $74.02$ (@30ep)
    \\
    \bottomrule
  \end{tabular}
  }
  \caption{We evaluate ensembles of the transferred models with fewer subsequent training epochs than training from scratch. "@ X ep" means that each member of the ensembles are trained for X epochs after transferred (for GMT/FGMT) or from scratch (for Full Ensemble). }
  \label{table:common hyperparameters}
\end{table}

\end{appendix}
\end{appendices}

\end{document}